\documentclass{article}





\usepackage{arxiv}

\usepackage[utf8]{inputenc} 
\usepackage[T1]{fontenc}    
\usepackage{hyperref}       
\usepackage{url}            
\usepackage{booktabs}       
\usepackage{amsfonts}       
\usepackage{nicefrac}       
\usepackage{microtype}      
\usepackage{amsmath}
\usepackage{amsfonts}
\usepackage{amsthm}
\usepackage{amssymb}
\usepackage{enumitem}
\usepackage{graphicx}
\usepackage{comment}
\usepackage{thmtools} 
\usepackage{thm-restate}

\usepackage[backend=biber,style=numeric,maxcitenames=2,maxbibnames=8,minbibnames=4,uniquelist=false,uniquename=init,isbn=false,
doi=false,url=false,firstinits,sortcites]{biblatex}
\let\citet\textcite
\let\citep\parencite
\addbibresource{ref.bib}

\usepackage[ruled,vlined]{algorithm2e}
\SetKwInput{KwParam}{Parameters}
\usepackage[capitalize,noabbrev]{cleveref} 
\usepackage{xcolor}

\newcommand{\vx}{\mathbf{x}}
\newcommand{\vs}{\mathbf{s}}
\newcommand{\vxt}{\vx_t}
\newcommand{\vy}{\mathbf{y}}

\newcommand{\vf}{\mathbf{f}}
\newcommand{\vM}{\mathbf{M}}
\newcommand{\vh}{\mathbf{h}}
\newcommand{\vzeta}{\boldsymbol{\zeta}}

\newcommand{\vstm}{\vs_{t-1}}

\newcommand{\mA}{\mathbf{A}}
\newcommand{\mI}{\mathbf{I}}

\newcommand{\sR}{\mathbb{R}}

\newcommand{\xhatm}{\hat{\vx}}
\newcommand{\xhatpm}{\hat{\vx}'}
\newcommand{\yhatm}{\hat{\vy}}
\newcommand{\yhatpm}{\hat{\vy}'}
\DeclareMathOperator{\MMD}{MMD}

\newcommand{\What}{\widehat{\mathcal{W}}}
\newcommand{\Wcemix}{\What_{c,\eps}^{\,\textrm{mix}}}

\newcommand{\iidsim}{\stackrel{\text{i.i.d}}{\sim}}
\newcommand{\httpsurl}[1]{\href{https://#1}{\nolinkurl{#1}}}

\newtheorem{thm}{Theorem}[section]

\newtheorem{lem}[thm]{Lemma}
\newtheorem{prop}[thm]{Proposition}

\newtheorem{defn}[thm]{Definition}

\newtheorem{rem}[thm]{Remark}
\newtheorem{exa}[thm]{Example}
\numberwithin{equation}{section}
\numberwithin{thm}{section}

\newcommand{\eps}{\varepsilon}

\newcommand{\E}{\mathbb{E}}
\newcommand{\N}{\mathbb{N}}

\newcommand{\R}{\mathbb{R}}

\newcommand{\Ccal}{\mathcal{C}}

\newcommand{\Fcal}{\mathcal{F}}
\newcommand{\Hcal}{\mathcal{H}}
\newcommand{\Kcal}{\mathcal{K}}
\newcommand{\Lcal}{\mathcal{L}}
\newcommand{\Mcal}{\mathcal{M}}
\newcommand{\Pcal}{\mathcal{P}}

\newcommand{\Xcal}{\mathcal{X}}
\newcommand{\Ycal}{\mathcal{Y}}
\newcommand{\Zcal}{\mathcal{Z}}
\newcommand{\Wcal}{\mathcal{W}}

\DeclareMathOperator{\argmax}{argmax}

\newcommand{\mytitle}{COT-GAN: Generating Sequential Data\\ via Causal Optimal Transport}

\title{\mytitle}
\author{%
  Tianlin Xu \\
  London School of Economics\\
  \texttt{t.xu12@lse.ac.uk}
  \And
  Li K. Wenliang \\
  University College London\\
  \texttt{kevinli@gatsby.ucl.ac.uk}
  \And
  Michael Munn \\
  Google, NY \\
  \texttt{munn@google.com}
  \And
  Beatrice Acciaio \\
  London School of Economics\\
  ETH Zurich\\
  \texttt{beatrice.acciaio@math.ethz.ch}
  
}

\begin{document}

\maketitle

\begin{abstract}
We introduce COT-GAN, an adversarial algorithm to train implicit generative models optimized for producing sequential data. The loss function of this algorithm is formulated using ideas from Causal Optimal Transport (COT), which combines classic optimal transport methods with an additional temporal causality constraint. Remarkably, we find that this causality condition provides a natural framework to parameterize the cost function that is learned by the discriminator as a robust (worst-case) distance, and an ideal mechanism for learning time dependent data distributions. Following Genevay et al.\ (2018), we also include an entropic penalization term which allows for the use of the Sinkhorn algorithm when computing the optimal transport cost. Our experiments show effectiveness and stability of COT-GAN when generating both low- and high-dimensional time series data. The success of the algorithm also relies on a new, improved version of the Sinkhorn divergence which demonstrates less bias in learning. 
\end{abstract}

\section{Introduction}

Dynamical data are ubiquitous in the world, including natural scenes such as video and audio data, and temporal recordings such as physiological and financial traces.
Being able to synthesize realistic dynamical data is a challenging unsupervised learning problem and has wide scientific and practical applications.
In recent years, training implicit generative models (IGMs) has proven to be a promising approach to data synthesis, 
driven by the work on generative adversarial networks (GANs) \citep{Goo}.

Nonetheless, training IGMs on dynamical data
poses an interesting yet difficult challenge. On 
one hand, learning complex spatial structures of static images has already received
significant effort within the research community. On the other hand, temporal dependencies are no less 
complicated since the dynamical features are strongly correlated with
spatial features. Recent works, including \citep{tgan,TimeGAN,wavegan,vgan,mocogan}, often tackle this problem by separating the model or loss into static and dynamic components. 

In this paper, we examine training dynamic IGMs for sequential data.  
We introduce a \textbf{new adversarial objective} that builds on
optimal transport (OT) theory, and constrains the 
transport plans to respect  \emph{causality}: 
the probability mass moved to the target sequence at time $t$ can only depend on the source sequence up to time $t$, see \citep{ABJ,BBLZ}.
A reformulation of the causality constraint 
leads to a new adversarial training objective, in the spirit of \cite{GPC} but tailored to sequential data. In addition, we demonstrate that optimizing the original Sinkhorn 
divergence over mini-batches causes biased parameter estimation, 
and propose the \textbf{mixed Sinkhorn divergence} which 
mitigates this problem. 
Our new framework, Causal Optimal Transport GAN (COT-GAN), outperforms
existing methods on a wide range
of datasets from traditional time series to high dimensional videos.

\section{Background}\label{section:introduction}
\subsection{Adversarial learning for implicit generative models}
\citet{Goo} introduced an adversarial scheme for training an IGM.
Given a (real) data distribution $\mu=\frac{1}{N}\sum_{i=1}^N\delta_{x^i},\,x^i\in\Xcal$, and a distribution 
$\zeta$ on some latent space $\Zcal$,
the generator is a function $g:\Zcal\to\Xcal$ trained so that the induced distribution $\nu=\zeta\circ g^{-1}$ is as close as possible to $\mu$ as 
judged by a discriminator. 
The discriminator is a function $f:\Xcal\to[0,1]$ trained to output a high value if the input is real (from $\mu$), and a low value otherwise (from $\nu$). 
In practice, the two functions are implemented as neural networks $g_\theta$ and $f_\varphi$ with parameters $\theta$ and $\varphi$, 
and the generator distribution is 
denoted by $\nu_\theta$.
The training objective is then formulated as a zero-sum game between the generator and the discriminator.
Different probability divergences were later proposed to evaluate the distance between $\mu$ and $\nu_\theta$ \citep{fGAN,mmdGAN,sobolevGAN,sMMDGan}.
Notably, the Wasserstein-1 distance was used in \citep{Arj,arjovsky2017towards}:
\begin{equation}\label{eq:W1_primal}
\mathcal{W}_1(\mu,\nu)=\inf_{\pi\in\Pi(\mu,\nu)}\E^\pi[\|x-y\|_1],
\end{equation}
where $\Pi(\mu,\nu)$ is the space of transport plans (couplings) between $\mu$ and $\nu$. 
Its dual form turns out to be a maximization problem over $\varphi$ such that $f_\varphi$ is Lipschitz. Combined with the minimization over $\theta$, 
a min-max problem can be formulated with a Lipschitz constraint on $f_\varphi$.

\subsection{Optimal transport and Sinkhorn divergences}\label{sect.GPC}
The optimization in \eqref{eq:W1_primal} 
is a special case of the classical (Kantorovich) optimal transport problem.
Given probability measures 
$\mu$ on $\Xcal$, $\nu$ on $\Ycal$, and a cost function $c:\Xcal\times\Ycal\to\R$, 
the optimal transport problem is formulated as
\begin{equation}\label{Wc_primal}
\Wcal_c(\mu,\nu):=\inf_{\pi\in\Pi(\mu,\nu)}\E^{\pi}[c(x,y)].
\end{equation}
Here, $c(x,y)$ represents the cost of transporting a unit of mass from $x\in\Xcal$ to $y\in\Ycal$, and $\Wcal_c(\mu,\nu)$ is thus the minimal total cost to transport the mass from $\mu$ to $\nu$. 
Obviously, the Wasserstein-1 distance \eqref{eq:W1_primal} corresponds to $c(x,y)=\|x-y\|_1$.
However, when $\mu$ and $\nu$ are supported on finite sets of size $n$, 
solving \eqref{Wc_primal} has super-cubic (in $n$) complexity \citep{cuturi2013sinkhorn, orlin1993faster,pele2009fast}, which is computationally expensive for 
large datasets.

Instead, \citet{GPC} proposed training IGMs by minimizing a regularized Wasserstein distance that can be computed more efficiently by the Sinkhorn algorithm; see \citep{cuturi2013sinkhorn}.
For transport plans with marginals $\mu$ supported on a finite set $\{x^i\}_i$ and $\nu$ on a finite set $\{y^{j}\}_j$, any $\pi\in\Pi(\mu,\nu)$ is also discrete with support on the set of all possible pairs $\{(x^i,y^{j})\}_{i,j}$. Denoting $\pi_{ij}=\pi(x^i,y^{j})$, the Shannon entropy of $\pi$ is given by
$
\textstyle{H(\pi):= -\sum_{i,j} \pi_{ij}\log(\pi_{ij})}.
$
For $\eps>0$, the regularized optimal transport problem then reads as
\begin{equation}\label{eq:regOT}
\Pcal_{c,\eps}(\mu,\nu):=\inf_{\pi\in\Pi(\mu,\nu)}\{\E^\pi[c(x,y)]-\eps H(\pi)\}.
\end{equation}
Denoting by $\pi_{c,\eps}(\mu,\nu)$ the optimizer in \eqref{eq:regOT}, one can define a regularized distance by
\begin{equation}\label{eq:Wce}
\mathcal{W}_{c,\eps}(\mu,\nu):= 
\E^{\pi_{c,\eps}(\mu,\nu)}[c(x,y)].
\end{equation}
Computing this distance is numerically more stable than solving the dual formulation of the OT problem, as the latter requires differentiating dual Kantorovich potentials; see e.g. \cite[Proposition~3]{bousquet2017optimal}.
To correct the fact that $\mathcal{W}_{c,\eps}(\alpha,\alpha)\neq0$, \citet{GPC} proposed to use the \emph{Sinkhorn divergence}
\begin{equation}\label{Sink}
\widehat{\mathcal{W}}_{c,\eps}(\mu,\nu):= 
2\mathcal{W}_{c,\eps}(\mu,\nu)-\mathcal{W}_{c,\eps}(\mu,\mu)-\mathcal{W}_{c,\eps}(\nu,\nu)
\end{equation}
as the objective function, and to learn the cost  $c_\varphi(x,y)=\|f_\varphi(x)-f_\varphi(y)\|$ parameterized by $\varphi$, resulting in the following adversarial objective
\begin{equation}
\label{minmaxOT}
\inf_\theta \sup_\varphi\, {\widehat{\mathcal{W}}}_{c_\varphi,\eps}(\mu,\nu_\theta).
\end{equation}
In practice, a sample-version of this cost is used, where $\mu$ and $\nu$ 
are replaced by distributions of mini-batches randomly extracted from them.

\section{Training generative models with Causal Optimal Transport}\label{section:background}
We now focus on data that consists of $d$-dimensional (number of channels), $T$-long sequences, so that $\mu$ and $\nu$ are distributions on the path space $\R^{d\times T}$. In this setting we introduce a special class of transport plans, between $\Xcal=\R^{d\times T}$ and $\Ycal=\R^{d\times T}$, that will be used to define our objective function; see Definition~\ref{def:causal-plan}. On $\Xcal\times\Ycal$, 
we denote by $x=(x_1,...,x_T)$ and $y=(y_1,...,y_T)$ the first and second half of the coordinates, and we 
let $\mathcal{F}^\Xcal=(\mathcal{F}^\Xcal_t)_{t=1}^T$ and $\mathcal{F}^\Ycal=(\mathcal{F}^\Ycal_t)_{t=1}^T$ be the canonical filtrations (for all $t$, $\mathcal{F}^\Xcal_t$ is the smallest $\sigma$-algebra s.t. $(x_1,...,x_T)\mapsto(x_1,...,x_t)$ is measurable; analogously for $\mathcal{F}^\Ycal$). 

\subsection{Causal Optimal Transport}\label{subsection:COT}
\begin{defn}\label{def:causal-plan}
A transport plan $\pi\in\Pi(\mu,\nu)$ is called causal if
\[
\pi(dy_t|dx_1,\cdots,dx_T)=\pi(dy_t|dx_1,\cdots,dx_t)\qquad \text{for all\, $t=1,\cdots,T-1$}.
\]
The set of all such plans will be denoted by $\Pi^{\Kcal}(\mu,\nu)$.
\end{defn}
Roughly speaking, the amount of mass transported by $\pi$ to a subset of the target space $\Ycal$ belonging to $\mathcal{F}^\Ycal_t$ depends on the source space $\Xcal$ only up to time $t$. 
Thus, a causal plan transports $\mu$ into $\nu$ in a non-anticipative way, which is a natural request in a sequential framework. 
In the present paper, we will use causality in the sense of Definition~\ref{def:causal-plan}. Note that, in the literature, the term causality is often used to indicate a mapping in which the output at a given time $t$ depends only on inputs up to time $t$.

Restricting the space of transport plans to $\Pi^\Kcal$ in the OT problem \eqref{Wc_primal} gives the COT problem  
\begin{equation}\label{COT}
\mathcal{K}_c(\mu, \nu) := \inf_{\pi \in \Pi^{\Kcal}(\mu, \nu)} \E^{\pi}[c(x,y)].
\end{equation}
COT has already found wide application in dynamic problems in stochastic calculus and mathematical finance, see e.g. \cite{ABZ,ABC,ABJ,BBBE,BBBW}.
The causality constraint can be equivalently formulated in several ways, see \cite[Proposition 2.3]{BBLZ}.  We recall here the formulation most well-suited for our purposes. Let
 $\Mcal(\Fcal^\Xcal,\mu)$ be the set of $(\Xcal,\Fcal^\Xcal,\mu)$-martingales, and define
\[
\Hcal(\mu):=\{(h,M) : h=(h_t)_{t=1}^{T-1},\ h_t\in\Ccal_b(\R^{d\times t}),\                       M=(M_t)_{t=1}^{T}\in\Mcal(\Fcal^\Xcal,\mu),\  M_t\in \Ccal_b(\R^{d\times t})\},
\]
where, as usual, $\Ccal_b(\mathbb{X})$ denotes the space of continuous, bounded functions on $\mathbb{X}$.
Then, a transport plan $\pi\in\Pi(\mu,\nu)$ is causal if and only if
\begin{equation}\label{causalhM}
\textstyle{\E^{\pi}\left[\sum_{t=1}^{T-1} h_t(y_{\leq t}) \Delta_{t+1}M(x_{\leq t+1})\right] = 0\;\; \text{for all $(h,M)\in\Hcal(\mu)$}},
\end{equation}
where $x_{\leq t} := (x_1, x_2, \dots, x_t)$ and similarly for $y_{\leq t}$, and $\Delta_{t+1}M(x_{\leq t+1}) := M_{t+1}(x_{\leq t+1}) - M_t(x_{\leq t})$. 
Therefore $\Hcal(\mu)$ acts as a class of test functions for causality. Intuitively, causality can be thought of as conditional independence (``given $x_{\leq t}$, $y_t$ is independent of $x_{>t}$''), that can be expressed in terms of conditional expectations. This in turn naturally lends itself to a formulation involving martingales.
Where no confusion can arise, with an abuse of notation we will simply write $h_t(y), M_t(x), \Delta_{t+1} M(x)$ rather than $h_t(y_{\leq t}), M_t(x_{\leq t}),\Delta_{t+1} M(x_{\leq t+1})$.

\subsection{Regularized Causal Optimal Transport}\label{sec:reg_cot}

In the same spirit of \cite{GPC}, we include an entropic regularization in the COT problem \eqref{COT} and consider
\begin{equation}\label{eq:reg_cot_problem}
\mathcal{P}^{\Kcal}_{c, \eps}(\mu, \nu) :=  \inf_{\pi \in \Pi^{\Kcal}(\mu, \nu)} \left\{\E^{\pi}[c(x,y)] - \eps H(\pi)\right\}.
\end{equation}
The solution to such problem is then unique due to strict concavity of $H$.
We denote by $\pi^{\Kcal}_{c,\eps}(\mu,\nu)$ the optimizer to the above problem, and define the regularized COT distance by
\[
\mathcal{K}_{c,\eps}(\mu,\nu):= 
\E^{\pi^{\Kcal}_{c,\eps}(\mu,\nu)}[c(x,y)].
\]

\begin{rem}\label{rem.lim}
In analogy to the non-causal case, it can be shown that, for discrete $\mu$ and $\nu$ (as in practice), the following limits holds:
\[
\mathcal{K}_{c}(\mu, \nu)\xleftarrow[\eps\to 0]{} \mathcal{K}_{c, \eps}(\mu, \nu) \xrightarrow[\eps\to \infty]{} \E^{\mu\otimes\nu}[c(x,y)],
\]
where $\mu\otimes\nu$ denotes the independent coupling.
\end{rem}
See \cref{sect.app.lim} for a proof.
This means that the regularized COT distance is between the COT distance and 
the loss obtained by independent coupling, and is closer to the former for small $\eps$.
Optimizing over the space of causal plans $\Pi^\Kcal(\mu,\nu)$ 
is not straightforward. Nonetheless, the following
proposition shows that the problem can be reformulated as a maximization over non-causal problems with respect to a specific family of cost functions.

\begin{prop}\label{cotsup}
The regularized COT problem \eqref{eq:reg_cot_problem} can be reformulated as
\begin{equation}\label{Pcce}
\Pcal^\Kcal_{c,\eps}(\mu,\nu) = \sup_{l\in\Lcal(\mu)} \mathcal{P}_{c+l, \eps}(\mu, \nu),
\end{equation}
where 
\begin{equation}\label{eq:L_set}
\Lcal(\mu):=\Bigg\{\sum_{j=1}^J \sum_{t=1}^{T-1} h^j_t(y)\Delta_{t+1}M^j(x): J\in\N, (h^j,M^j)\in\Hcal(\mu) \Bigg\}.
\end{equation}
\end{prop}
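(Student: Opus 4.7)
The plan is Lagrangian duality: rewrite the causality constraint as a supremum over the test functions in $\Lcal(\mu)$, and then swap the infimum over couplings with this supremum via Sion's minimax theorem. The key input is the characterization of causality recalled in (\ref{causalhM}): $\pi \in \Pi(\mu,\nu)$ is causal iff $\E^\pi[\sum_{t=1}^{T-1} h_t(y)\Delta_{t+1}M(x)]=0$ for every $(h,M) \in \Hcal(\mu)$. Because $\Lcal(\mu)$ in (\ref{eq:L_set}) is the linear span of such martingale-increment functionals and is therefore a vector space closed under scalar multiplication, one gets
\[
\sup_{l\in\Lcal(\mu)}\E^\pi[l] \;=\; \begin{cases} 0, & \pi\in \Pi^\Kcal(\mu,\nu),\\ +\infty, & \pi\in \Pi(\mu,\nu)\setminus \Pi^\Kcal(\mu,\nu).\end{cases}
\]

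First I would use this indicator-like identity to rewrite the constrained objective in unconstrained Lagrangian form,
\[
\Pcal^\Kcal_{c,\eps}(\mu,\nu) \;=\; \inf_{\pi\in \Pi(\mu,\nu)} \sup_{l\in\Lcal(\mu)} \big\{\E^\pi[c+l]-\eps H(\pi)\big\},
\]
since the inner supremum equals $0$ on causal $\pi$ and $+\infty$ otherwise, so the outer infimum is effectively taken over $\Pi^\Kcal(\mu,\nu)$ with the $l$-term vanishing. Next I would invoke Sion's minimax theorem to exchange the inf and the sup. The functional $F(\pi,l) := \E^\pi[c+l] - \eps H(\pi)$ is affine (hence concave and continuous) in $l$ for fixed $\pi$, and strictly convex and continuous in $\pi$ for fixed $l$, since $-H$ is strictly convex and the boundedness and continuity of $h_t$ and $M_t$ make $\pi \mapsto \E^\pi[l]$ (weakly) continuous. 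The set $\Lcal(\mu)$ is convex, and $\Pi(\mu,\nu)$ is convex and compact: a polytope in $\R^{n \times m}$ in the discrete setting relevant for training, and weakly compact in general by Prokhorov's theorem. Sion's theorem then yields
\[
\Pcal^\Kcal_{c,\eps}(\mu,\nu) \;=\; \sup_{l\in\Lcal(\mu)} \inf_{\pi\in \Pi(\mu,\nu)} \big\{\E^\pi[c+l] - \eps H(\pi)\big\} \;=\; \sup_{l\in\Lcal(\mu)} \Pcal_{c+l,\eps}(\mu,\nu),
\]
which is exactly (\ref{Pcce}).

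The main obstacle is justifying the minimax swap, since $\Lcal(\mu)$ is an unbounded linear space. Sion's theorem requires compactness only on one side, so compactness of $\Pi(\mu,\nu)$ together with the affine/convex structure above suffices. As a sanity check, the easy inequality $\sup_l \Pcal_{c+l,\eps}(\mu,\nu) \leq \Pcal^\Kcal_{c,\eps}(\mu,\nu)$ follows directly from the definitions: for any $l \in \Lcal(\mu)$ and any causal $\pi$, $\E^\pi[l]=0$, so $\Pcal_{c+l,\eps}(\mu,\nu) \leq \E^\pi[c+l] - \eps H(\pi) = \E^\pi[c] - \eps H(\pi)$; taking the infimum over causal $\pi$ and then the supremum over $l$ gives the bound. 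It is the reverse inequality $\Pcal^\Kcal_{c,\eps}(\mu,\nu) \leq \sup_l \Pcal_{c+l,\eps}(\mu,\nu)$ that genuinely requires the minimax exchange, and is where the compactness/continuity hypotheses of Sion's theorem are essential.
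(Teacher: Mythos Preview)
Your proposal is correct and follows essentially the same route as the paper: encode causality via the indicator identity $\sup_{l\in\Lcal(\mu)}\E^\pi[l]\in\{0,+\infty\}$, rewrite $\Pcal^\Kcal_{c,\eps}$ as $\inf_\pi\sup_l$, and swap using a minimax theorem with $\Pi(\mu,\nu)$ convex and compact and $\Lcal(\mu)$ convex. If anything, you supply more detail than the paper (explicit appeal to Sion, the affine/convex structure of $F(\pi,l)$, and the observation that compactness is needed only on the $\Pi$ side), whereas the paper simply cites ``the min-max theorem, thanks to convexity of $\Lcal(\mu)$, and convexity and compactness of $\Pi(\mu,\nu)$.''
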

 This means that the optimal value of the regularized COT problem equals the maximum value over the family of regularized OT problems w.r.t. the set of cost functions $\{c+l : l\in\Lcal(\mu)\}$.
 This result has been proven in \cite{ABJ}. As it is crucial for our analysis, we show it in \cref{sec:cot_prob_proof}.

Proposition~\ref{cotsup} suggests the following worst-case distance 
between $\mu$ and $\nu$:
\begin{equation}\label{eq.rd}
\sup_{l\in\Lcal(\mu)} \mathcal{W}_{c+l, \eps}(\mu, \nu),
\end{equation}
as a regularized Sinkhorn distance that respects the causal constraint on the transport plans. 

In the context of training a dynamic IGM, the training dataset is a collection of paths 
$\{x^i\}_{i=1}^N$ of equal length $T$, $x^i=(x_1^i,..,x_T^i)$, $x^i_t\in\R^d$.
As $N$ is usually very large, we proceed as usual by approximating $\mathcal{W}_{c+l, \eps}(\mu, \nu)$ with
its empirical mini-batch counterpart. 
Precisely, for a given IGM $g_\theta$, we fix a batch size $m$ and sample $\{{x}^i\}_{i=1}^m$ 
from the dataset and $\{{z}^i\}_{i=1}^m$ from $\zeta$. Denote the generated samples by
${y}_\theta^i=g_\theta({z}^i)$, and the empirical distributions by
\[
\xhatm=\frac{1}{m}\sum_{i=1}^m\delta_{{x}^i},\quad
\yhatm_\theta=\frac{1}{m}\sum_{i=1}^m\delta_{y_\theta^i}.
\]
The empirical distance $\mathcal{W}_{c+l, \eps}(\xhatm, \yhatm_\theta)$ can be efficiently approximated 
by the Sinkhorn algorithm.

\begin{figure}
    \centering
    \includegraphics[width=\textwidth]{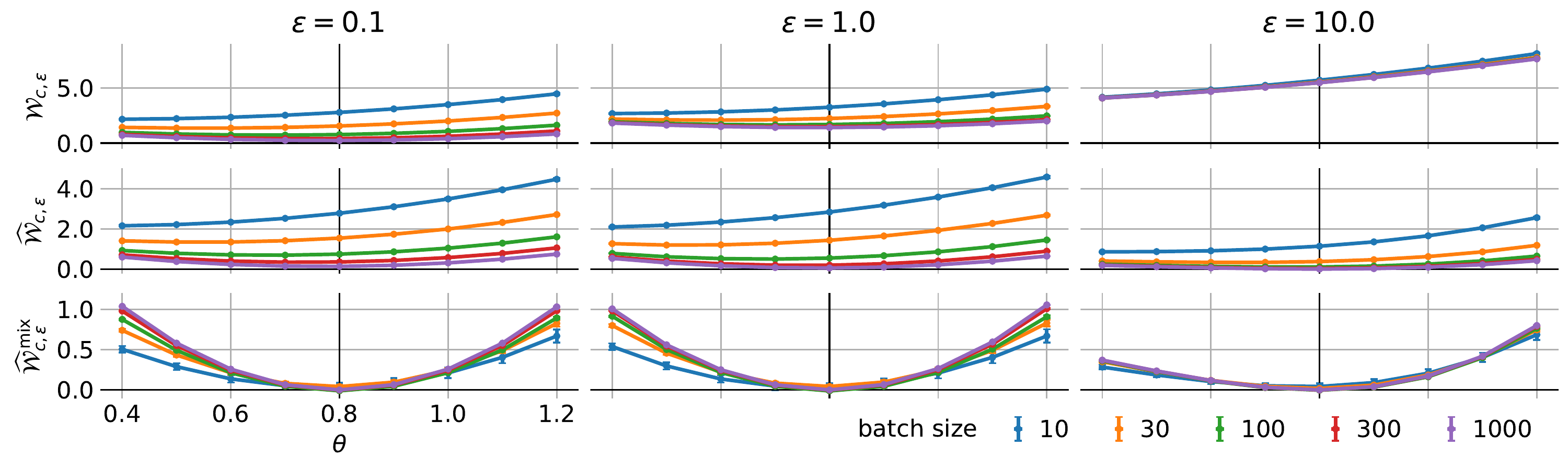}
    \caption{
    Regularized distance \eqref{eq:Wce}, Sinkhorn divergence \eqref{Sink}
    and mixed Sinkhorn divergence \eqref{Sink_W4}
   computed for mini-batches of size $m$ from $\mu$ and 
    $\nu_\theta$, where $\mu=\nu_{0.8}$. Color indicates batch size.
    Curve and errorbar show the mean and sem estimated from 300  
    draws of mini-batches.
    }
    \label{fig:sink_bias}
\end{figure}

\subsection{Reducing the bias with mixed Sinkhorn divergence}\label{sec:bias}
When implementing the Sinkhorn divergence \eqref{Sink} at the level of mini-batches, one canonical candidate clearly is
\begin{equation}\label{Sink_gpc}
2\mathcal{W}_{c_{\varphi}, \eps}(\xhatm,\yhatm_\theta)
-\mathcal{W}_{c_{\varphi}, \eps}(\xhatm, \xhatm)
-\mathcal{W}_{c_{\varphi}, \eps}(\yhatm_\theta, \yhatm_\theta),
\end{equation}
which is indeed what is used in \cite{GPC}.
While the expression in \eqref{Sink_gpc} does converge in expectation to \eqref{Sink} for $m\to\infty$ (\citep[Theorem 3]{genevay2019sample}), it is not clear whether 
it is an adequate loss given data of fixed batch size $m$.
In fact, we find that this is not the case, and demonstrate it here empirically.
\begin{exa}\label{bias_example}
We build an example where the data distribution $\mu$ belongs to a parameterized family of distributions $\{\nu_\theta\}_\theta$, with $\mu=\nu_{0.8}$ (details in \cref{sect.app.sink}).
As shown in \cref{fig:sink_bias} (top two rows), neither the expected regularized distance \eqref{eq:Wce} nor the Sinkhorn divergence \eqref{Sink} 
reaches minimum at $\theta=0.8$, especially for small $m$. 
This means that optimizing $\nu$ over mini-batches will not lead to $\mu$.
\end{exa}

Instead, we propose the following \emph{mixed Sinkhorn divergence} at the level of mini-batches:
\begin{equation}\label{Sink_W4}
\Wcemix(\xhatm, \xhatpm, \yhatm_\theta, \yhatpm_\theta)
:=\mathcal{W}_{c, \eps}(\xhatm, \yhatm_\theta)
+\mathcal{W}_{c, \eps}(\xhatpm, \yhatpm_\theta)
-\mathcal{W}_{c, \eps}(\xhatm, \xhatpm)
-\mathcal{W}_{c, \eps}(\yhatm_\theta, \yhatpm_\theta),
\end{equation}
where $\xhatm$ and $\xhatpm$ are the empirical distributions of mini-batches from the data distribution,
and $\yhatm_\theta$ and $\yhatpm_\theta$ from the IGM distribution $\zeta\circ g_\theta^{-1}$.
The idea is to take into account the bias within both the distribution $\mu$ as well as the distribution $\nu_\theta$ when sampling mini-batches.

Similar to  \eqref{Sink_gpc}, when the batch size $m\to\infty$, \eqref{Sink_W4} also converges to \eqref{Sink} in expectation. So, the natural question arises: for a fixed $m\in\N$, which of the two does a better job in translating the idea of the Sinkhorn divergence at the level of mini-batches? Our experiments suggest that \eqref{Sink_W4} is indeed the better choice. As shown in \cref{fig:sink_bias} (bottom row), $\Wcemix$ finds the correct minimizer for all $m$ in Example~\ref{bias_example}. 
To support this finding, note that the triangular inequality implies
\[
\E\left[ \left|
\mathcal{W}_{c_{\varphi}, \eps}(\hat{\textbf{x}}, \hat{\textbf{y}}_\theta)
+\mathcal{W}_{c_{\varphi}, \eps}(\hat{\textbf{x}}', \hat{\textbf{y}}_\theta')-
2\mathcal{W}_{c,\eps}(\mu,\nu)\right|\right]
\leq
2\E\left[\left|
\mathcal{W}_{c_{\varphi}, \eps}(\hat{\textbf{x}}, \hat{\textbf{y}}_\theta)-\mathcal{W}_{c,\eps}(\mu,\nu)\right|\right].
\]
One can possibly argue that in \eqref{Sink_W4} we are using two batches of size $m$, thus simply considering a larger mini-batch in \eqref{Sink_gpc}, say of size $2m$, may perform just as well. However, we found this not to be the case and our experiments confirm that the mixed Sinkhorn divergence \eqref{Sink_W4} does outperform \eqref{Sink_gpc} even when we allow for larger batch size. This reasoning can be extended by considering $\mathcal{W}_{c, \eps}(.,.)$ with more terms for different combinations of mini-batches. In fact, this is what is done in \citep{mixed_sink}, which came to our attention after submitting this paper for review.
We have tested different variations in several experiments and while empirically there is no absolute winner, adding more mini-batches increases the computational cost; see Appendix \ref{sect.app.sink}.
 
\subsection{COT-GAN: Adversarial learning for sequential data}
We now combine the results in \cref{sec:reg_cot} and \cref{sec:bias} to formulate an adversarial training algorithm for IGMs. 
First, 
we approximate the set of functions \eqref{eq:L_set} by truncating the sums at a fixed $J$, 
and we parameterize $\vh_{\varphi_1}:=(h_{\varphi_1}^j)_{j=1}^J$ and 
$\vM_{\varphi_2}:=(M_{\varphi_2}^j)_{j=1}^J$ as two separate 
neural networks, and let $\varphi:=(\varphi_1,\varphi_2)$.  To capture the adaptedness of those processes, we employ architectures where the output at time $t$ depends on the input only up to time $t$.
The mixed Sinkhorn divergence
between $\xhatm$ and $\yhatm_\theta$ is then 
calculated with respect to
a parameterized cost function
\begin{equation}\label{eq:cot_cost}
    c^\Kcal_{\varphi}(x,y):=c(x,y) + \sum_{j=1}^J\sum_{t=1}^{T-1} h^{j}_{\varphi_1,t}(y) \Delta_{t+1}M^{j}_{\varphi_2}(x).
\end{equation}

Second, it is not obvious how to directly impose the martingale condition, 
as constraints involving conditional expectations 
cannot be easily enforced in practice.
Rather, we penalize processes $M$ for which increments at every time step are non-zero on average. 
For an $(\Xcal,\Fcal^\Xcal)$-adapted process $M_{\varphi_2}^j$ and a mini-batch $\{x^i\}_{i=1}^m$ ($\sim\hat{\textbf{x}}$),
we define the martingale penalization for $\vM_{\varphi_2}$ as
\[
{p}_{\vM_{\varphi_2}}(\xhatm):=\frac{1}{mT}\sum_{j=1}^J\sum_{t=1}^{T-1}\Bigg|\sum_{i=1}^m \frac{\Delta_{t+1} M_{\varphi_2}^j(x^i)}{\sqrt{\text{Var}[M_{\varphi_2}^j]} + \eta}\Bigg|,
\]
where $\text{Var}[M]$ is the empirical variance of $M$ over time and batch, and $\eta>0$ is a small constant.
Third, we use the mixed normalization introduced in \eqref{Sink_W4}.
Each of the four terms is approximated by running the Sinkhorn algorithm on the cost $c_\varphi^\Kcal$ for an a priori fixed number of iterations $L$.

Altogether, we arrive at the following adversarial objective function
for COT-GAN:
\begin{equation}\label{eq:minibath_objective}
\What^{\textrm{mix},L}_{c_{\varphi}^\Kcal, \eps}(
    \xhatm, \xhatpm, \yhatm_\theta, \yhatpm_\theta) 
- {\lambda} p_{{\bf M}_{\varphi_2}}(\xhatm),
\end{equation}
where $\xhatm$ and $\xhatpm$ are empirical measures corresponding to two samples of the dataset, $\yhatm_\theta$ and $\yhatpm_\theta$ are the ones corresponding to two samples from $\nu_\theta$, and $\lambda$ is a positive constant. 
We update $\theta$ to decrease this objective,
and $\varphi$ to increase it.

While the generator $g_\theta:\mathcal{Z}\to\mathcal{X}$ acts as in classical GANs, the adversarial role here is played by $\vh_{\varphi_1}$ and $\vM_{\varphi_2}$. In this setting, the discriminator, parameterized by $\varphi$, learns a robust (worst-case) distance between the real data distribution $\mu$ and the generated distribution $\nu_\theta$, where the class of cost functions as in \eqref{eq:cot_cost} originates from causality. The algorithm is summarized in \cref{alg}.
Its time complexity scales as $\mathcal{O}((J+d)LTm^2)$ for each iteration. 

\begin{algorithm}[t]
\SetAlgoLined
\KwData{$\{{x}^i\}_{i=1}^{N}$ (real data), $\zeta$ (probability distribution on latent space $\Zcal$)}
\KwParam{$\theta_0$, $\varphi_{0}$, $m$ (batch size), $\eps$ (regularization parameter), $L$ (number of Sinkhorn iterations), $\alpha$ (learning rate), $\lambda$ (martingale penalty coefficient)}
\KwResult{$\theta$, $\varphi$}
 Initialize: $\theta \leftarrow \theta_0$, $\varphi \leftarrow \varphi_{0}$\\
 \For{$k = 1, 2,\dots$}{
    ~Sample $\{{x}^i\}_{i=1}^{m}$ and $\{{x}'^i\}_{i=1}^{m}$ from real data;\\[0.1cm]
    ~Sample $\{{z}^i\}_{i=1}^{m}$ and $\{{z}'^i\}_{i=1}^{m}$ from 
    ~$\zeta$;\\[0.1cm]
    ~$({y}_\theta^i,{y}_\theta'^i) \leftarrow (g_{\theta}({z}^i),g_{\theta}({z}'^i))$;\\[0.1cm]
    ~{Compute 
    $\What^{\textrm{mix},L}_{c_{\varphi}^\Kcal, \eps}
    (\xhatm, \xhatpm, \yhatm_\theta, \yhatpm_\theta)
    $ \eqref{Sink_W4} by the Sinkhorn 
    algorithm, with $c_{\varphi}^\Kcal$ given by \eqref{eq:cot_cost}}; \\
    ~${\varphi} \leftarrow {\varphi} + \alpha \nabla_\varphi\Big( \What^{\textrm{mix},L}_{c_{\varphi}^\Kcal, \eps}
  (\xhatm, \xhatpm, \yhatm_\theta, \yhatpm_\theta) - {\lambda} p_{{\bf M}_{\varphi_2}}(\xhatm)\Big)$; \\[0.1cm]
    ~Sample $\{{x}^i\}_{i=1}^{m}$ and $\{{x}'^i\}_{i=1}^{m}$ from real data;\\[0.1cm]
    ~Sample $\{{z}^i\}_{i=1}^{m}$ and $\{{z}'^i\}_{i=1}^{m}$ from  $\zeta$;\\[0.1cm]
    ~$({y}_\theta^i,{y}_\theta'^i) \leftarrow (g_{\theta}({z}^i),g_{\theta}({z}'^i))$;\\[0.1cm]
    ~{Compute 
    $\What^{\textrm{mix},L}_{c_{\varphi}^\Kcal, \eps}
    (\xhatm, \xhatpm, \yhatm_\theta, \yhatpm_\theta)
    $ \eqref{Sink_W4} by the Sinkhorn 
    algorithm, with $c_{\varphi}^\Kcal$ given by \eqref{eq:cot_cost}};
    ~$\theta \leftarrow \theta - \alpha \nabla_\theta\left( 
    \What^{\textrm{mix},L}_{c_{\varphi}^\Kcal, \eps}(\xhatm, \xhatpm, \yhatm_\theta, \yhatpm_\theta)\right)$;
}
 \caption{training COT-GAN by SGD}
 \label{alg}
\end{algorithm}

\section{Related work}\label{sec:related}
Early video generation literature focuses on dynamic texture modeling \cite{doretto2003dynamic,szummer1996temporal,wei2000fast}. Recent efforts in video generation within the GAN community have been devoted to designing GAN architectures of a generator and discriminator 
to tackle the spatio-temporal dependencies separately, e.g., \citep{vgan,tgan, mocogan}. VGAN \citep{vgan} explored a two-stream generator that combines a network for a static background and another one for moving foreground trained on the original GAN objective. TGAN \citep{tgan} proposed a new structure capable of generating dynamic background as well as a weight clipping trick to regularize the discriminator.
In addition to a unified generator, MoCoGAN \citep{mocogan} employed two discriminators to judge both the quality of frames locally and the evolution of motions globally.  

The broader literature of sequential data generation attempts to capture the dependencies in time by simply deploying recurrent neural networks
in the architecture \citep{crnngan,rcgan, biosignal, TimeGAN}. 
Among them, TimeGAN \citep{TimeGAN} demonstrated improvements in time series generation by adding a teacher-forcing component in the loss function.  
Alternatively, WaveGAN \cite{wavegan} adopted the causal structure of WaveNet \citep{wavenet}. 
Despite substantial progress made, existing sequential GANs are generally domain-specific. We therefore aim to offer a framework that considers (transport) causality in the objective function and is suitable for more general sequential settings.

Whilst our analysis is built upon \citep{cuturi2013sinkhorn} and \citep{GPC}, we remark two major differences between COT-GAN and  the algorithm in \citep{GPC}.
First, we consider a different family of costs.
While \citep{GPC} learns the cost function $c(f_\varphi(x),f_\varphi(y))$
by parameterizing $f$ with $\varphi$, the family of costs in COT-GAN is found by adding a causal component to $c(x,y)$ in terms of $\mathbf{h}_{\varphi_1}$ and $\mathbf{M}_{\varphi_2}$.
is the mixed Sinkhorn divergence we propose, which reduces biases in parameter estimation and can be used as a generic divergence for training IGMs not limited to time series settings. 

\section{Experiments}

\subsection{Time series}\label{sec:low_d_exp}
We now validate COT-GAN empirically\footnote{Code and data are available at \httpsurl{github.com/tianlinxu312/cot-gan}}. 
For times series that have a 
relatively small dimensionality $d$ but exhibit complex temporal structure, 
we compare COT-GAN with the following methods:
\textbf{TimeGAN} \citep{TimeGAN} as reviewed in \cref{sec:related};
\textbf{WaveGAN} \citep{wavegan} as reviewed in \cref{sec:related};
and \textbf{SinkhornGAN}, similar to \citep{GPC} with cost
$c(f_\varphi(x),f_\varphi(y))$ 
where $\varphi$ is trained to increase the mixed Sinkhorn 
divergence with weight clipping. 
All methods use $c(x,y)=\|x-y\|_2^2$.
The networks $h$ and $M$ in COT-GAN and $f$ in SinkhornGAN 
share the same architecture.
Details of models and datasets are in \cref{sec:low_d_exp_detail}.

\begin{figure}[t]
    \centering
    \includegraphics[width=\textwidth]{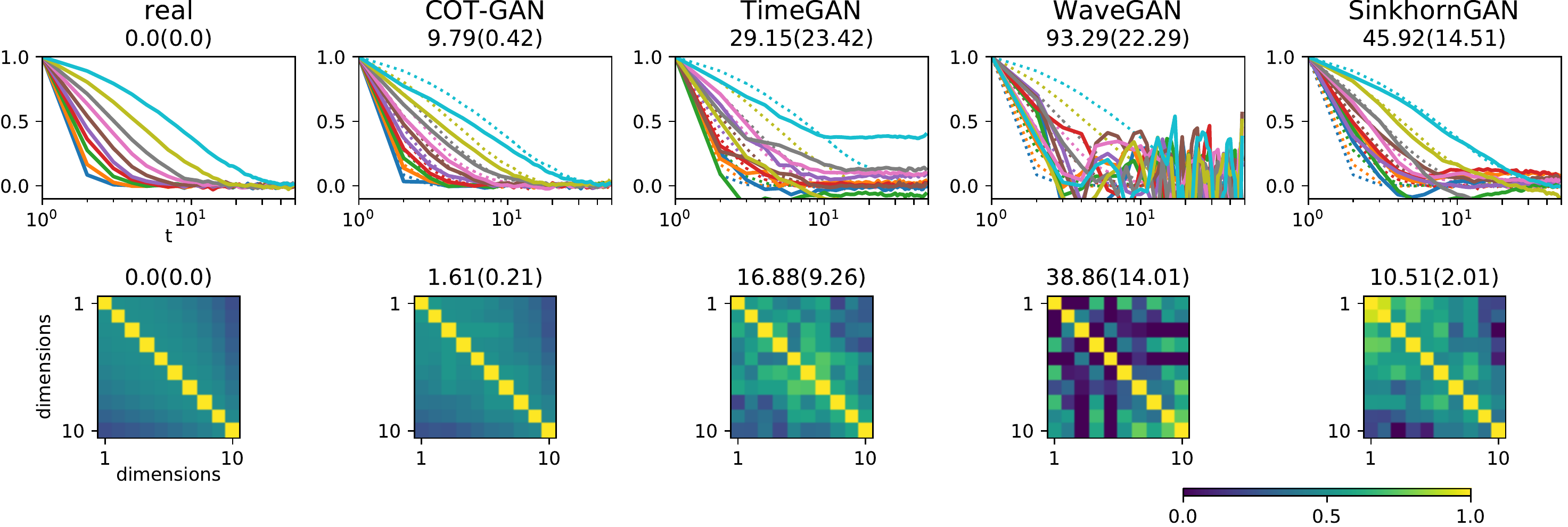}
    \caption{Results on learning the multivariate AR-1 process. 
    Top row shows the auto-correlation coefficient for each channel. 
    Bottom row shows the correlation coefficient between channels averaged over time.
    The numbers on top of each panel are the mean and standard deviation (in brackets) of the sum of the absolute difference between the correlation coefficients computed from real (leftmost) and generated samples for 16 runs with different random seeds. }
    \label{fig:exp_AR}
\end{figure}

\begin{figure}[t]
    \centering
    \includegraphics[width=\textwidth]{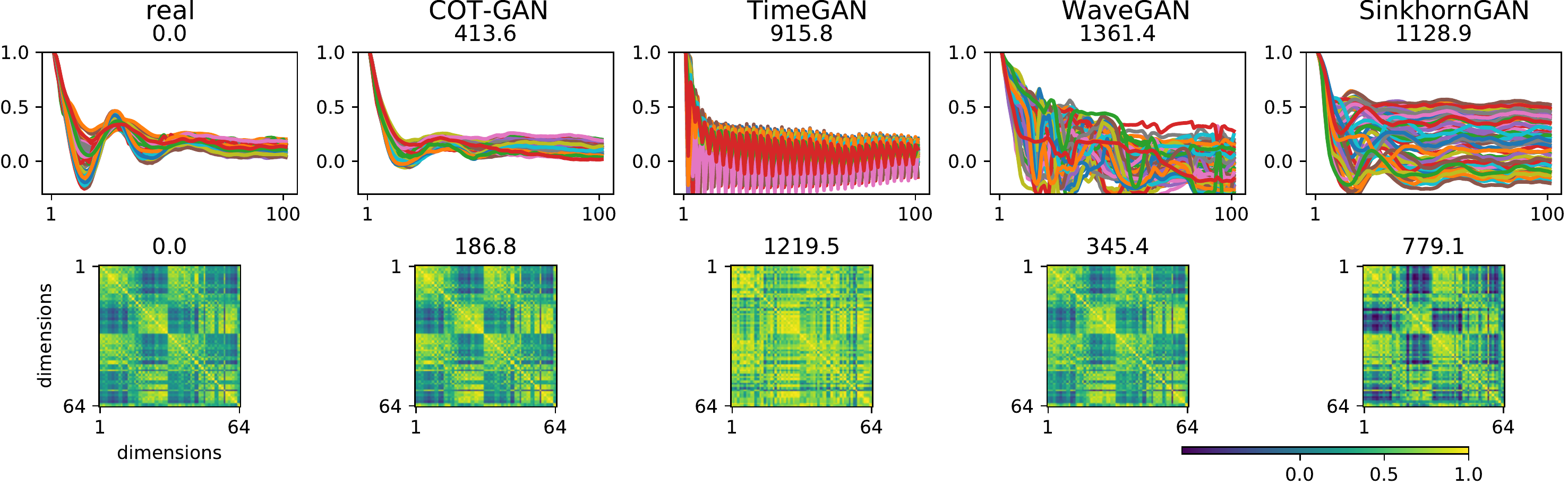}
    \caption{Results on EEG data. The same correlations as \cref{fig:exp_AR} are shown. 
    }
    \label{fig:exp_eeg}
\end{figure}

\paragraph{Autoregressive processes.}
We first test whether COT-GAN can learn temporal and spatial correlation
in a multivariate first-order auto-regressive process (AR-1).

For these experiments, we report two evaluation statistics: the sum of the absolute difference of the correlation coefficients between channels averaged over time, and the absolute difference between the correlation coefficients of real samples and those of generated samples. 
We evaluate the performance of each method by taking the mean and standard deviation of these two evaluation statistics over 16 runs with different random seeds.

In \cref{fig:exp_AR}, we show an example plot of results from a single run, as well as the evaluation statistics aggregated over all 16 runs on top of each panel. 
COT-GAN samples have correlation structures that best match the real data. 
Neither TimeGAN, WaveGAN nor SinkhornGAN captures the correlation structure for this dataset. The small standard deviation of the evaluation statistics demonstrates that COT-GAN is the most stable model at least in the AR-1 experiment since it produces similar results from each run of the model. 

\paragraph{Noisy oscillations.}
The noisy oscillation distribution 
is composed of sequences of 20-element arrays 
(1-D images) \citep{WenliangSahani2019neurally}. 
\Cref{fig:oscillation} in \cref{sec:low_d_exp_detail} shows data  
as well as generated samples by different training methods.
To evaluate performance, we estimate two attributes of the samples by Monte Carlo:
the marginal distribution of pixel values, 
and the joint distribution of the location 
at adjacent time steps. 
COT-GAN samples match the real data best.

\paragraph{Electroencephalography (EEG).} This dataset is from 
the UCI repository \citep{uci} and contains recordings from 43 healthy 
subjects each undergoing around 80 trials. Each data sequence 
has 64 channels and we model the first 100 time steps. 
We compare performance of COT-GAN with respect to other baseline models by investigating how well the generated samples match with the real data in terms of temporal and channel correlations, see Figure \ref{fig:exp_eeg}, and how the coefficient $\lambda$ affects sample quality, see \cref{sec:low_d_exp_detail}. 
COT-GAN generates the best samples compared with other baselines across two metrics. 

\begin{figure}
    \centering
    \includegraphics[width=\textwidth]{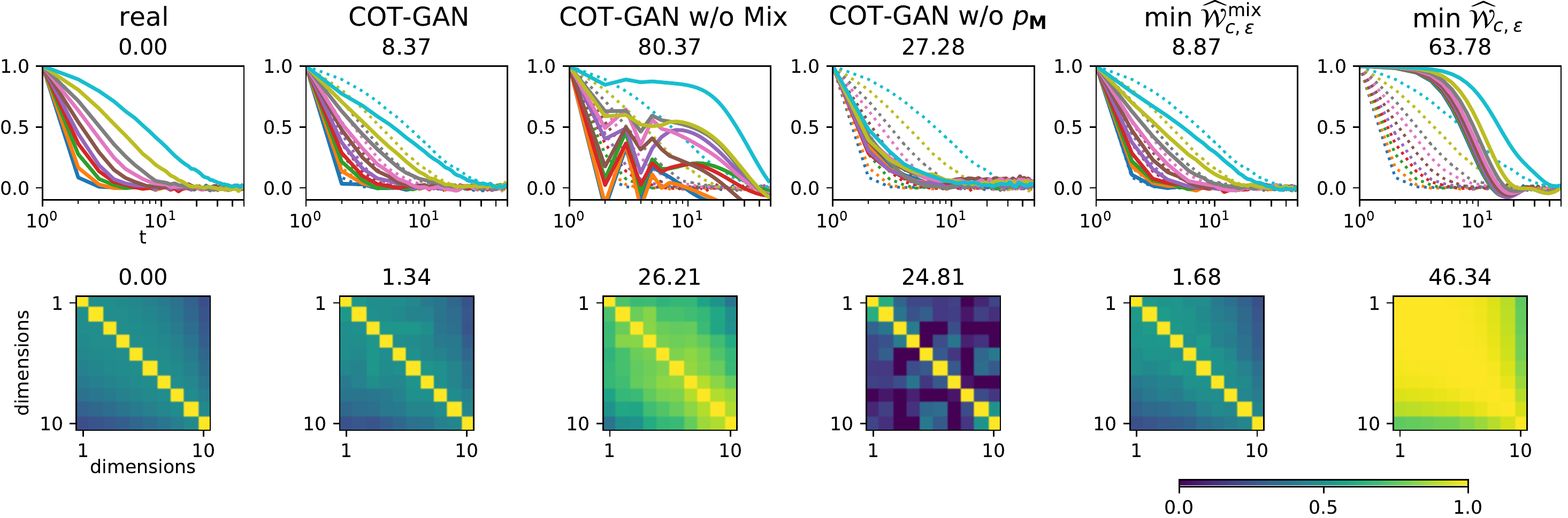}
    \caption{Ablation investigation.}
    \label{fig:ablation}
\end{figure}

In addition, we provide an ablation investigation of COT-GAN, in which we study the impact of the components of the model by excluding each of them in the multivariate AR-1 experiment. In \cref{fig:ablation}, we compare the real samples with COT-GAN, COT-GAN  using the original Sinkhorn divergence without the mixing, COT-GAN without the martingale penalty $p_{\mathbf{M}}$, direct minimization (without a discriminator) of the mixed and original Sinkhorn divergences from \eqref{Sink_W4} and \eqref{Sink_gpc}. We conclude that each component of COT-GAN plays a role in producing the best result in this experiment, and that the mixed Sinkhorn divergence is the most important factor for improvements in performance.

\subsection{Videos}
We train COT-GAN on animated Sprites \citep{disentangledvae,reed2015deep} and human action sequences \citep{ActionsAsSpaceTimeShapes_iccv05}.
We pre-process the Sprites sequences to have a sequence length of $T=13$, and the human action sequences to have length $T=16$. Each frame has dimension $64\times64\times3$. We employ the same architecture for the generator and discriminator to train both datasets. 
Both the generator and discriminator consist of a generic LSTM with 2-D convolutional layers.  
Details of the data pre-processing, GAN architectures, hyper-parameter settings, and training techniques are reported in Appendix \ref{apx:gan_structure}. 
 
Baseline models chosen for the video datasets are \textbf{MoCoGAN} from \citep{mocogan}, and direct minimization of the mixed Sinkhorn divergence \eqref{Sink_W4}, as it achieves a good result when compared to the other methods addressed in Figures \ref{fig:exp_AR} and \ref{fig:ablation}.  We show the real data and generated samples from COT-GAN side by side in \cref{human_action_results}. Generated samples from all methods, without cherry-picking, are provided in Appendix \ref{apx:humanactionresults}. The evaluation metrics we use to assess model performance are the Fr{\'e}chet Inception Distance (FID) \citep{fid} which compares individual frames, the Fr{\'e}chet Video Distance (FVD) \citep{fvd} which compares the video sequences as a whole by mapping samples into features via pretrained 3D convolutional networks, and 
their kernel counterparts (KID, KVD) \citep{kid}. Previous studies suggest that FVD correlates better with human judgement than KVD for videos \citep{fvd}, 
whereas KID correlates better than FID on images \citep{hype}. 

\begin{figure}[ht]
    \centering
    \includegraphics[width=0.45\textwidth]{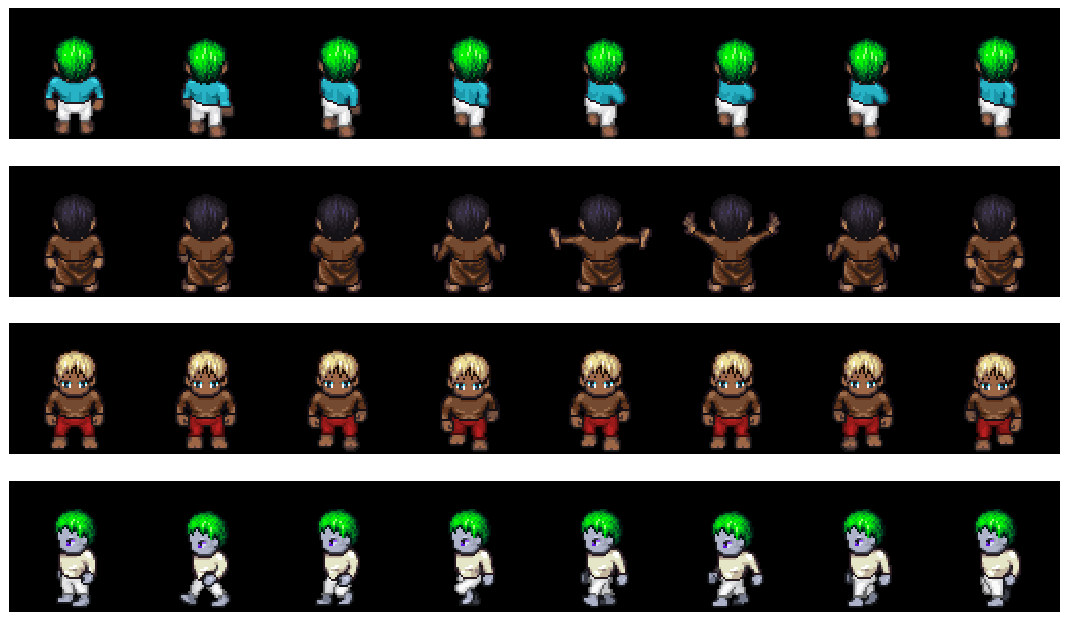}
    \includegraphics[width=0.45\textwidth]{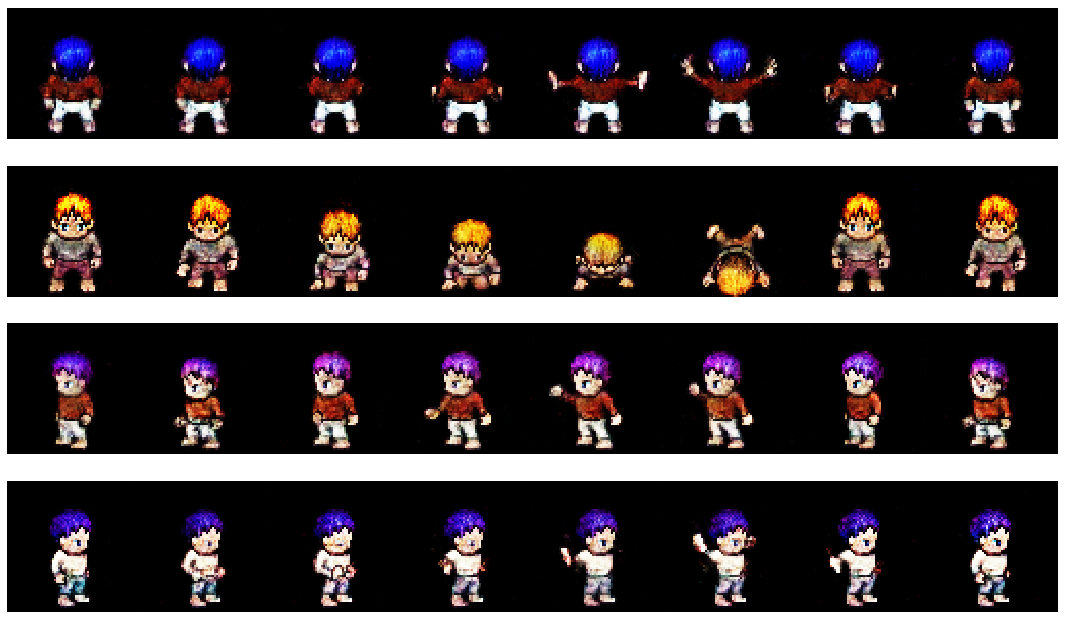}\\
    \vspace{1em}
    \includegraphics[width=0.45\textwidth]{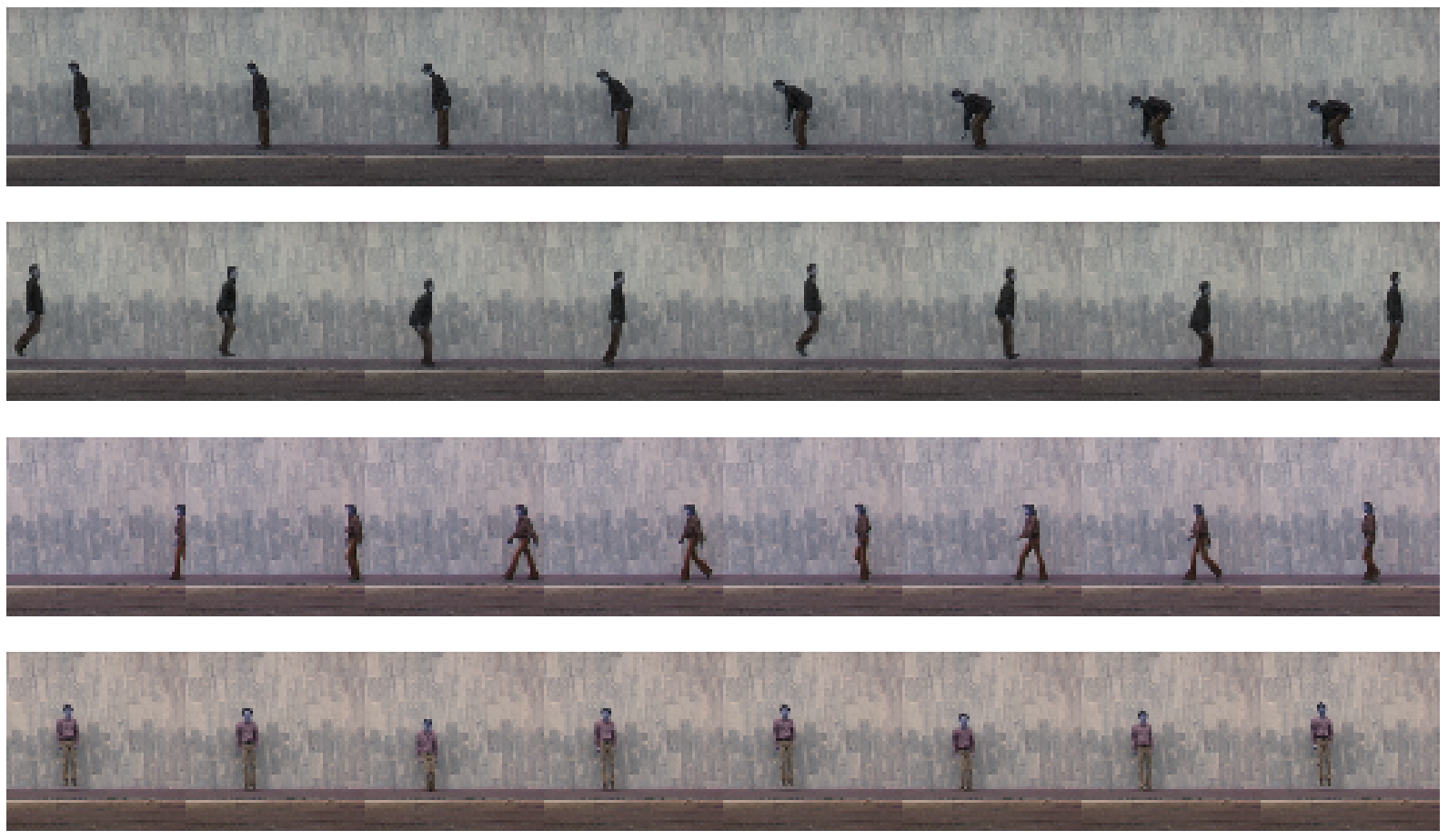}
    \includegraphics[width=0.45\textwidth]{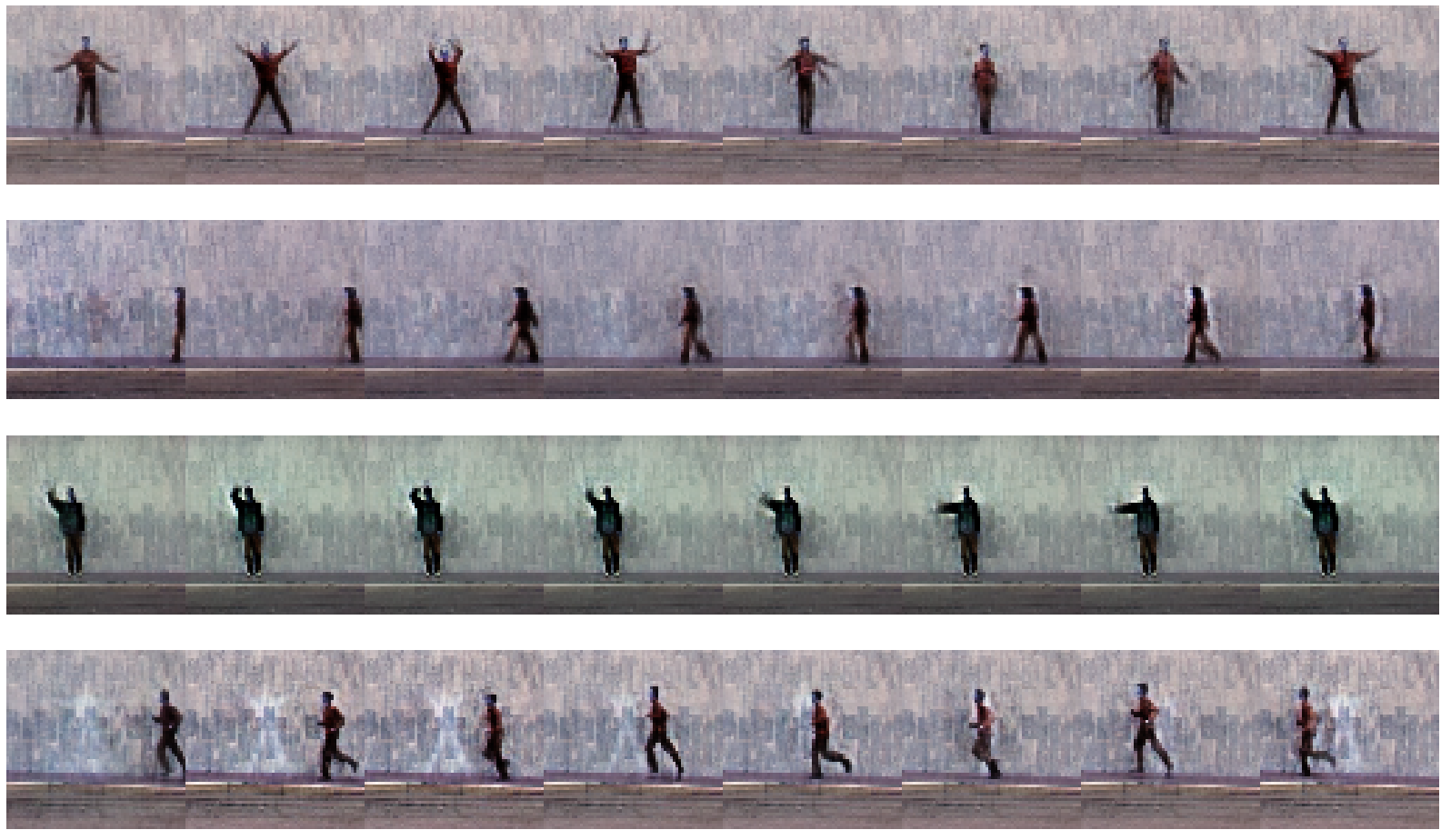}
    \caption{Animated (top) and human (bottom) action videos. 
    Left column reports real data samples, and right column 
 samples from COT-GAN.}
    \label{human_action_results}
\end{figure} 

\begin{table}[ht]
\centering
\caption{Evaluations for video datasets. Lower value indicates better sample quality.}
\begin{tabular}[t]{lcccc}
\hline
\hline
\bf{Sprites}   &   FVD     &      FID      &       KVD  &      KID \\
\hline
MoCoGAN  &   1\,108.2    &   280.25       &   146.8    &  0.34       \\
\scalebox{1}{$\min \Wcemix$}&   498.8       &   \bf{81.56}   &      83.2  &  \bf{0.078} \\
COT-GAN                     & \bf{458.0}    &   84.6      & \bf{66.1}  &  0.081    \\

\hline

\bf{Human actions}          &               &               &            &            \\
\hline
MoCoGAN   &   1\,034.3       &   151.3       &   89.0     &  0.26   \\
\scalebox{1}{$\min \Wcemix$}&   507.6  &   120.7       &  \bf{34.3} &  0.23 \\
COT-GAN       &   \bf{462.8}       &   \bf{58.9}   &  43.7     &  \bf{0.13}\\
\hline
\hline
\vspace{1pt}
\end{tabular}
\label{tab:compare1}
\end{table}

In Table \ref{tab:compare1} the evaluation scores are estimated using 10,000 generated samples. 
For Sprites, COT-GAN performs better than the other two methods on FVD and KVD. However, minimization of the mixed Sinkhorn divergence produces slightly better FID and KID scores when compared to COT-GAN. The results in \citep{fvd} suggest that
FID better captures the frame-level quality, while FVD is better suited for the temporal coherence in videos.
For the human action dataset, COT-GAN is the best performing method across all metrics except for KVD. 

\section{Discussion}
With the present paper, we introduce the use of causal transport theory in the machine learning literature. As already proved in other research fields, we believe it may have a wide range of applications here as well. The performance of COT-GAN already suggests that 
constraining the transport plans to be causal is a promising direction  
for generating sequential data. 
The approximations we introduce, such as the mixed Sinkhorn distance \eqref{Sink_W4} and truncated sum in
\eqref{eq:L_set}, are sufficient to produce good experimental results, 
and provide opportunities for more theoretical analyses in future studies.
Directions of future development 
include ways to learn from data with flexible lengths, extensions to conditional COT-GAN, and improved methods to enforce the martingale property for $\vM$ and better parameterize the causality constraint. 

\section{Broader impact}
The COT-GAN algorithm introduced in this paper is suitable to generate sequential data, when the real dataset consists of i.i.d. sequences or of stationary time series.
It opens up doors to many applications that can benefit from time series synthesis.
For example, researchers often do not have access to abundant training data due to privacy concerns, high cost, and data scarcity. This hinders the capability of building accurate predictive models. 

Ongoing research
is aimed at developing a modified COT-GAN algorithm to generate financial time series.  The high non-stationarity of financial data requires different features and architectures, whilst causality when measuring distances between sequences remains the crucial tool.
The application to market generation is of main interest for the financial and insurance industry, for example in model-independent pricing and hedging, portfolio selection, risk management, and stress testing. In broader scientific research, our approach can be used to 
estimate from data the parameters of simulation-based models that describe 
physical processes. These models can be, for instance, differential 
equations describing neural activities, compartmental models 
in epidemiology, and chemical reactions involving multiple 
reagents.

\section*{Acknowledgments and Disclosure of Funding}
BA thanks the financial support from the Erwin Schr\"odinger Institute during the thematic programme on Optimal Transport (May 2019, Vienna). This material is based upon work supported by Google Cloud.  LKW is supported by the Gatsby Charitable Foundation.

\printbibliography

@article{ABC,
  title={Extended mean field control problems: stochastic maximum principle and transport perspective},
  author={Acciaio, Beatrice and Backhoff-Veraguas, Julio and Carmona, Ren{\'e}},
  journal={SIAM Journal on Control and Optimization},
  volume={57},
  number={6},
  year={2019},
  publisher={SIAM}
}

@inproceedings{cuturi2013sinkhorn,
  title={Sinkhorn distances: Lightspeed computation of optimal transport},
  author={Cuturi, Marco},
  booktitle={NeurIPS},
  year={2013}
}

@inproceedings{GPC,
  title={Learning Generative Models with Sinkhorn Divergences},
  author={Genevay, Aude and Peyre, Gabriel and Cuturi, Marco},
  booktitle={AISTATS},
  year={2018}
}

@inproceedings{Goo,
  title={Generative adversarial nets},
  author={Goodfellow, Ian and Pouget-Abadie, Jean and Mirza, Mehdi and Xu, Bing and Warde-Farley, David and Ozair, Sherjil and Courville, Aaron and Bengio, Yoshua},
  booktitle={NeurIPS},
  year={2014}
}

@article{BBLZ,
  title={Causal transport in discrete time and applications},
  author={Backhoff, Julio and Beiglbock, Mathias and Lin, Yiqing and Zalashko, Anastasiia},
  journal={SIAM Journal on Optimization},
  volume={27},
  number={4},
  year={2017},
  publisher={SIAM}
}

@article{BBBW,
  title={Estimating processes in adapted {W}asserstein distance},
  author={Backhoff, Julio and Bartl, Daniel and Beiglb{\"o}ck, Mathias and Wiesel, Johannes},
  journal={arXiv preprint arXiv:2002.07261},
  year={2020}
}

@article{BBBE,
  title={Adapted {W}asserstein distances and stability in mathematical finance},
  author={Backhoff-Veraguas, Julio and Bartl, Daniel and Beiglb{\"o}ck, Mathias and Eder, Manu},
  journal={Finance and Stochastics},
  volume={24},
  number={3},
  pages={601--632},
  year={2020},
  publisher={Springer}
}

@inproceedings{Arj,
  title={Wasserstein Generative Adversarial Networks},
  author={Arjovsky, Martin and Chintala, Soumith and Bottou, L{\'e}on},
  booktitle={ICML},
  year={2017}
}

@article{arjovsky2017towards,
  title={Towards Principled Methods for Training Generative Adversarial Networks. arXiv e-prints, art},
  author={Arjovsky, Martin and Bottou, L{\'e}on},
  journal={arXiv preprint arXiv:1701.04862},
  year={2017}
}

@article{ABZ,
  title={Causal optimal transport and its links to enlargement of filtrations and continuous-time stochastic optimization},
  author={Acciaio, Beatrice and Backhoff-Veraguas, Julio and Zalashko, Anastasiia},
  journal={Stochastic Processes and their Applications},
  year={2019},
  publisher={Elsevier}
}

@article{ABJ,
  title={Cournot-Nash equilibrium and optimal transport in a dynamic setting},
  author={Acciaio, Beatrice and Backhoff-Veraguas, Julio and Jia, Junchao},
  journal={arXiv preprint arXiv:2002.08786},
  year={2020}
}

@inproceedings{vgan,
  title={Generating videos with scene dynamics},
  author={Vondrick, Carl and Pirsiavash, Hamed and Torralba, Antonio},
  booktitle={NeurIPS},
  year={2016}
}

@inproceedings{tgan,
  title={Temporal generative adversarial nets with singular value clipping},
  author={Saito, Masaki and Matsumoto, Eiichi and Saito, Shunta},
  booktitle={ICCV},
  year={2017}
}

@inproceedings{mocogan,
  title={Mocogan: Decomposing motion and content for video generation},
  author={Tulyakov, Sergey and Liu, Ming-Yu and Yang, Xiaodong and Kautz, Jan},
  booktitle={CVPR},
  year={2018}
}

@inproceedings{sMMDGan,
  title={On gradient regularizers for MMD GANs},
  author={Arbel, Michael and Sutherland, Dougal and Bi{\'n}kowski, Miko{\l}aj and Gretton, Arthur},
  booktitle={NeurIPS},
  year={2018}
}

@incollection{TimeGAN,
title = {Time-series Generative Adversarial Networks},
author = {Yoon, Jinsung and Jarrett, Daniel and van der Schaar, Mihaela},
booktitle = {NeurIPS},
year = {2019},
}

@inproceedings{fGAN,
  title={f-gan: Training generative neural samplers using variational divergence minimization},
  author={Nowozin, Sebastian and Cseke, Botond and Tomioka, Ryota},
  booktitle={NeurIPS},
  year={2016}
}

@inproceedings{mmdGAN,
  title={Mmd gan: Towards deeper understanding of moment matching network},
  author={Li, Chun-Liang and Chang, Wei-Cheng and Cheng, Yu and Yang, Yiming and P{\'o}czos, Barnab{\'a}s},
  booktitle={NeurIPS},
  year={2017}
}

@inproceedings{sobolevGAN,
  title={Sobolev gan},
  author={Mroueh, Youssef and Li, Chun-Liang and Sercu, Tom and Raj, Anant and Cheng, Yu},
  booktitle={ICLR},
  year={2018}
}

@inproceedings{genevay2019sample,
  title={Sample Complexity of Sinkhorn Divergences},
  author={Genevay, Aude and Chizat, L{\'e}na{\"\i}c and Bach, Francis and Cuturi, Marco and Peyr{\'e}, Gabriel},
  booktitle={AISTATS},
  year={2019}
}

@inproceedings{wavegan,
  author    = {Chris Donahue and Julian J. McAuley and Miller S. Puckette},
  title     = {Adversarial Audio Synthesis},
  booktitle = {ICLR},
  year      = {2019},
}

@inproceedings{wavenet,
  author    = {A{\"{a}}ron van den Oord and
               Sander Dieleman and
               Heiga Zen and
               Karen Simonyan and
               Oriol Vinyals and
               Alex Graves and
               Nal Kalchbrenner and
               Andrew W. Senior and
               Koray Kavukcuoglu},
  title     = {WaveNet: {A} Generative Model for Raw Audio},
  booktitle = {ISCA workshop},
  year      = {2016},
}

@article{doretto2003dynamic,
  title={Dynamic textures},
  author={Doretto, Gianfranco and Chiuso, Alessandro and Wu, Ying Nian and Soatto, Stefano},
  journal={International Journal of Computer Vision},
  volume={51},
  number={2},
  year={2003},
  publisher={Springer}
}

@inproceedings{szummer1996temporal,
  title={Temporal texture modeling},
  author={Szummer, Martin and Picard, Rosalind W},
  booktitle={International Conference on Image Processing},
  volume={3},
  year={1996},
}

@inproceedings{wei2000fast,
  title={Fast texture synthesis using tree-structured vector quantization},
  author={Wei, Li-Yi and Levoy, Marc},
  booktitle={Annual conference on Computer graphics and interactive techniques},
  year={2000}
}

@inproceedings{crnngan, 
title={C-RNN-GAN: A continuous recurrent neural network with adversarial training}, 
author={Olof Mogren}, 
booktitle={Constructive Machine Learning Workshop (CML) at NIPS},
year={2016} 
}

@article{rcgan,
  title={Real-valued (medical) time series generation with recurrent conditional gans},
  author={Esteban, Crist{\'o}bal and Hyland, Stephanie L and R{\"a}tsch, Gunnar},
  journal={arXiv preprint arXiv:1706.02633},
  year={2017}
}

@inproceedings{biosignal,
  title={Biosignal data augmentation based on generative adversarial networks},
  author={Haradal, Shota and Hayashi, Hideaki and Uchida, Seiichi},
  booktitle={International Conference in Medicine and Biology Society},
  year={2018},
}

@book{CT91,
  title={Elements of information theory},
  author={Cover, Thomas M and Thomas, Joy A},
  year={2012},
  publisher={John Wiley \& Sons}
}

@article{G63,
  title={Maximum entropy for hypothesis formulation, especially for multidimensional contingency tables},
  author={Good, Irving J and others},
  journal={The Annals of Mathematical Statistics},
  volume={34},
  number={3},
  year={1963},
  publisher={Institute of Mathematical Statistics}
}

@misc{uci,
author = "Dua, Dheeru and Graff, Casey",
year = "2017",
title = "{UCI} Machine Learning Repository",
url = "http://archive.ics.uci.edu/ml",
institution = "University of California, Irvine, School of Information and Computer Sciences" }

@article{eeg_data,
  title={Event related potentials during object recognition tasks},
  author={Zhang, Xiao Lei and Begleiter, Henri and Porjesz, Bernice and Wang, Wenyu and Litke, Ann},
  journal={Brain Research Bulletin},
  volume={38},
  number={6},
  year={1995},
  publisher={Elsevier}
}

@inproceedings{fid,
  title={Gans trained by a two time-scale update rule converge to a local nash equilibrium},
  author={Heusel, Martin and Ramsauer, Hubert and Unterthiner, Thomas and Nessler, Bernhard and Hochreiter, Sepp},
  booktitle={NeurIPS},
  year={2017}
}

@article{fvd,
  title={Towards accurate generative models of video: A new metric \& challenges},
  author={Unterthiner, Thomas and van Steenkiste, Sjoerd and Kurach, Karol and Marinier, Raphael and Michalski, Marcin and Gelly, Sylvain},
  journal={arXiv preprint arXiv:1812.01717},
  year={2018}
}

@inproceedings{kid,
  title={Demystifying MMD GANs},
  author={Bi{\'n}kowski, Miko{\l}aj and Sutherland, Dougal J and Arbel, Michael and Gretton, Arthur},
  booktitle={ICLR},
  year={2018},
  url={https://openreview.net/forum?id=r1lUOzWCW}
}

@inproceedings{disentangledvae,
  title={Disentangled sequential autoencoder},
  author={Li, Yingzhen and Mandt, Stephan},
  booktitle={ICML},
  pages={5670--5679},
  year={2018}
}

@inproceedings{reed2015deep,
  title={Deep visual analogy-making},
  author={Reed, Scott E and Zhang, Yi and Zhang, Yuting and Lee, Honglak},
  booktitle={NeurIPS},
  year={2015}
}

@inproceedings{ActionsAsSpaceTimeShapes_iccv05,
  author	= {Moshe Blank and Lena Gorelick and Eli Shechtman and Michal Irani and Ronen Basri},
  title 	= {Actions as Space-Time Shapes},
  booktitle	= {ICCV},
  year   	= {2005},
  }

@article{bousquet2017optimal,
  title={From optimal transport to generative modeling: the VEGAN cookbook},
  author={Bousquet, Olivier and Gelly, Sylvain and Tolstikhin, Ilya and Simon-Gabriel, Carl-Johann and Schoelkopf, Bernhard},
  journal={arXiv preprint arXiv:1705.07642},
  year={2017}
}

@InCollection{WenliangSahani2019neurally,
  author    = {Wenliang, Li Kevin and Sahani, Maneesh},
  booktitle = {NeurIPS},
  title     = {A neurally plausible model for online recognition and postdiction in a dynamical environment},
  year      = {2019},
}

@inproceedings{hype,
  title={HYPE: A Benchmark for Human eYe Perceptual Evaluation of Generative Models},
  author={Zhou, Sharon and Gordon, Mitchell and Krishna, Ranjay and Narcomey, Austin and Fei-Fei, Li F and Bernstein, Michael},
  booktitle={NeurIPS},
  year={2019}
}

@inproceedings{pele2009fast,
  title={Fast and robust earth mover's distances},
  author={Pele, Ofir and Werman, Michael},
  booktitle={2009 IEEE 12th International Conference on Computer Vision},
  pages={460--467},
  year={2009},
  organization={IEEE}
}

@article{orlin1993faster,
  title={A faster strongly polynomial minimum cost flow algorithm},
  author={Orlin, James B},
  journal={Operations research},
  volume={41},
  number={2},
  pages={338--350},
  year={1993},
  publisher={INFORMS}
}

@inproceedings{johnson2016composing,
  title={Composing graphical models with neural networks for structured representations and fast inference},
  author={Johnson, Matthew J and Duvenaud, David K and Wiltschko, Alex and Adams, Ryan P and Datta, Sandeep R},
  booktitle={NeurIPS},
  pages={2946--2954},
  year={2016}
}

@inproceedings{
mixed_sink,
title={Improving {GAN}s Using Optimal Transport},
author={Salimans, Tim and Zhang, Han and Radford, Alec and Metaxas, Dimitris},
booktitle={ICLR},
year={2018},
url={https://openreview.net/forum?id=rkQkBnJAb},
}

@article{biaswasserstein,
  title={The cramer distance as a solution to biased wasserstein gradients},
  author={Bellemare, Marc G and Danihelka, Ivo and Dabney, Will and Mohamed, Shakir and Lakshminarayanan, Balaji and Hoyer, Stephan and Munos, R{\'e}mi},
  journal={arXiv preprint arXiv:1705.10743},
  year={2017}
}

\clearpage
\appendix

\begin{center}
\Large {\textbf{\mytitle:\\Supplementary material}}
\end{center}

\section{Specifics on regularized Causal Optimal Transport}

\subsection{Limits of regularized Causal Optimal Transport}\label{sect.app.lim}
In this section we prove the limits stated in Remark~\ref{rem.lim}.

\begin{lem}
Let $\mu$ and $\nu$ be discrete measures, say on path spaces $\mathbb{X}^T$ and $\mathbb{Y}^T$, with $|\mathbb{X}|=m$ and $|\mathbb{Y}|=n$. Then
\[
\mathcal{K}_{c, \eps}(\mu, \nu)   \xrightarrow[\eps\to 0]{}  \mathcal{K}_{c}(\mu, \nu).
\]
\end{lem}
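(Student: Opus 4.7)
The plan is a short sandwich argument. The key observations are: (i) because $\mu$ and $\nu$ are supported on at most $m^T$ and $n^T$ points respectively, every $\pi\in\Pi(\mu,\nu)$ is a discrete probability measure on at most $N:=(mn)^T$ atoms, hence $0\le H(\pi)\le \log N$ uniformly in $\pi$; (ii) the set $\Pi^{\Kcal}(\mu,\nu)$ is a compact convex polytope (cut out of the transport polytope by the finitely many linear causality relations in \eqref{causalhM}), so an optimizer $\pi^*$ of the unregularized problem \eqref{COT} exists and $\E^{\pi^*}[c]=\mathcal{K}_c(\mu,\nu)$; (iii) the entropic optimizer $\pi_\eps:=\pi^{\Kcal}_{c,\eps}(\mu,\nu)$ is itself causal, i.e.\ $\pi_\eps\in\Pi^{\Kcal}(\mu,\nu)$, by definition of \eqref{eq:reg_cot_problem}.

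The lower half of the sandwich is immediate from (iii): feasibility of $\pi_\eps$ in \eqref{COT} gives
$$\mathcal{K}_{c,\eps}(\mu,\nu)=\E^{\pi_\eps}[c]\ \ge\ \mathcal{K}_c(\mu,\nu).$$
For the upper half, plugging $\pi^*$ into the variational problem \eqref{eq:reg_cot_problem} and comparing against the minimum value attained by $\pi_\eps$ gives
$$\E^{\pi_\eps}[c]-\eps H(\pi_\eps)\ \le\ \E^{\pi^*}[c]-\eps H(\pi^*),$$
so that, using the uniform entropy bound from (i),
$$\mathcal{K}_{c,\eps}(\mu,\nu)\ \le\ \mathcal{K}_c(\mu,\nu)+\eps\bigl(H(\pi_\eps)-H(\pi^*)\bigr)\ \le\ \mathcal{K}_c(\mu,\nu)+\eps\log N.$$
Combining both halves yields the quantitative estimate $|\mathcal{K}_{c,\eps}(\mu,\nu)-\mathcal{K}_c(\mu,\nu)|\le \eps\log N$, which in particular gives the claimed limit as $\eps\to 0$ with an explicit $O(\eps)$ rate.

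I do not anticipate a substantive obstacle; the argument is entirely finite dimensional and bypasses any $\Gamma$-convergence machinery. The one point that deserves care is keeping separate the value functional $\mathcal{P}^{\Kcal}_{c,\eps}$, which contains the entropic penalty, from the cost-only quantity $\mathcal{K}_{c,\eps}$, which is what actually appears in the limit; this is precisely why both sides of the sandwich are phrased in terms of $\E^{\pi_\eps}[c]$ rather than the value of the penalized objective.
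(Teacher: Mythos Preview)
Your proof is correct and follows essentially the same sandwich argument as the paper: both rely on the uniform entropy bound over the (finite) causal transport polytope to squeeze $\mathcal{K}_{c,\eps}$ between $\mathcal{K}_c$ and $\mathcal{K}_c+O(\eps)$. Your presentation is slightly cleaner in that you explicitly invoke the unregularized optimizer $\pi^*$ as a comparator, and your entropy bound $\log N$ is sharper than the paper's $N e^{-1}$, but the underlying mechanism is identical.
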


\begin{proof} We mimic the proof of Theorem~4.5 in \cite{ABJ}, and note that the entropy of any $\pi\in\Pi(\mu,\nu)$  is uniformly bounded:
\begin{equation}\label{eq_bdd_ent}
0\leq H(\pi) \leq C:=m^Tn^T e^{-1}. 
\end{equation}
This yields
\begin{eqnarray}\label{eq_three_inequalities}
\begin{split}\inf_{\pi\in\Pi^\Kcal(\mu,\nu)} \E^\pi[c] - \eps\, C +\eps H(\pi^\Kcal_{c,\eps}(\mu,\nu)) &\leq& \inf_{\pi\in\Pi^\Kcal(\mu,\nu)} \left\{\E^\pi[c] - \eps\, H(\pi) \right\}+\eps H(\pi^\Kcal_{c,\eps}(\mu,\nu))\\
&\leq&
\inf_{\pi\in\Pi^\Kcal(\mu,\nu)} \E^\pi[c]+\eps H(\pi^\Kcal_{c,\eps}(\mu,\nu)).
\end{split}
\end{eqnarray}
Now, note that $\inf_{\pi\in\Pi^\Kcal(\mu,\nu)} \left\{\E^\pi[c] - \eps\, H(\pi) \right\}= \mathcal{K}_{c, \eps}(\mu, \nu)-\eps H(\pi^\Kcal_{c,\eps}(\mu,\nu))$, and that, for $\eps\to0$, the LHS and RHS in \eqref{eq_three_inequalities} both tend to $\mathcal{K}_{c}(\mu, \nu)$. 
\end{proof}

\begin{lem}
Let $\mu$ and $\nu$ be discrete measures. Then
\[
\mathcal{K}_{c, \eps}(\mu, \nu) \xrightarrow[\eps\to \infty]{} \E^{\mu\otimes\nu}[c(x,y)].
\]
\end{lem}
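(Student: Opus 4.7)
The plan is to use the same sandwich-style argument as in the previous lemma, but this time comparing the minimizer against the maximum-entropy coupling, which in the discrete setting is known to be the independent coupling $\mu\otimes\nu$.

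First, I would verify that $\mu\otimes\nu$ lies in $\Pi^\Kcal(\mu,\nu)$: under the independent coupling, the conditional distribution of $y_t$ given $(x_1,\dots,x_T)$ is simply the marginal of $y_t$ under $\nu$, which trivially does not depend on $x_{>t}$, so causality holds. Then I would recall the standard identity $H(\pi) = H(\mu) + H(\nu) - \mathrm{KL}(\pi\,\|\,\mu\otimes\nu)$, which shows that $\mu\otimes\nu$ is the unique maximizer of Shannon entropy over all of $\Pi(\mu,\nu)$, and in particular over the subset $\Pi^\Kcal(\mu,\nu)$ that contains it.

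Second, by optimality of $\pi^\Kcal_{c,\eps}(\mu,\nu)$, plugging in the competitor $\mu\otimes\nu$ gives
\[
\E^{\pi^\Kcal_{c,\eps}}[c] - \eps H(\pi^\Kcal_{c,\eps}) \;\leq\; \E^{\mu\otimes\nu}[c] - \eps H(\mu\otimes\nu),
\]
which rearranges to
\[
0 \;\leq\; \eps\bigl(H(\mu\otimes\nu) - H(\pi^\Kcal_{c,\eps})\bigr) \;\leq\; \E^{\mu\otimes\nu}[c] - \E^{\pi^\Kcal_{c,\eps}}[c].
\]
Since the support of any $\pi\in\Pi(\mu,\nu)$ is contained in a fixed finite set, $\E^\pi[c]$ is uniformly bounded, so the right-hand side is bounded uniformly in $\eps$. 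Dividing by $\eps$ and letting $\eps\to\infty$ forces $H(\pi^\Kcal_{c,\eps}) \to H(\mu\otimes\nu)$.

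Third, I would conclude by a compactness-plus-uniqueness argument. The set $\Pi(\mu,\nu)$ is a compact subset of a finite-dimensional simplex, $H$ is continuous and strictly concave, and $\mu\otimes\nu$ is its unique maximizer. Hence any limit point of the family $\{\pi^\Kcal_{c,\eps}\}_{\eps}$ must be $\mu\otimes\nu$, so $\pi^\Kcal_{c,\eps}\to \mu\otimes\nu$ as $\eps\to\infty$. Continuity of $\pi\mapsto \E^\pi[c]$ on the simplex then yields
\[
\mathcal{K}_{c,\eps}(\mu,\nu) \;=\; \E^{\pi^\Kcal_{c,\eps}}[c(x,y)] \;\longrightarrow\; \E^{\mu\otimes\nu}[c(x,y)].
\]
The only subtlety I anticipate is the verification that the independent coupling is causal (which is essentially immediate from Definition~\ref{def:causal-plan}); after that, the argument is essentially the standard large-temperature limit for entropically regularized problems, mirroring the structure already used in the $\eps\to 0$ case.
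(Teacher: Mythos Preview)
Your argument is correct and uses the same two ingredients the paper relies on: (i) the independent coupling $\mu\otimes\nu$ is causal and is the unique entropy maximizer in $\Pi(\mu,\nu)$ (hence in $\Pi^\Kcal(\mu,\nu)$), and (ii) $\E^\pi[c]$ is uniformly bounded over couplings in the discrete setting. The execution differs, however. The paper's proof is terser: from uniform boundedness of the cost it asserts directly that for all sufficiently large $\eps$ the optimizer $\pi^\Kcal_{c,\eps}$ \emph{equals} $\mu\otimes\nu$, so that $\Kcal_{c,\eps}(\mu,\nu)=\E^{\mu\otimes\nu}[c]$ eventually (not just in the limit). Your route instead establishes $H(\pi^\Kcal_{c,\eps})\to H(\mu\otimes\nu)$ from the optimality inequality and then upgrades this to $\pi^\Kcal_{c,\eps}\to\mu\otimes\nu$ via compactness and strict concavity of $H$. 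Your version is the more careful one: for a generic cost $c$ the regularized optimizer is typically only \emph{close} to, not equal to, the independent coupling for finite $\eps$, so the paper's ``eventual equality'' claim is heuristic as stated, whereas your limiting argument is fully rigorous.
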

\begin{proof} Being $\mu$ and $\nu$ discrete, $\E^\pi[c]$ is uniformly bounded for $\pi\in\Pi^\Kcal(\mu,\nu)$.
Therefore, for $\eps$ big enough, the optimizer in $\mathcal{P}^\Kcal_{c, \eps}(\mu, \nu)$ is $\hat\pi:=\argmax_{\pi\in\Pi^\Kcal(\mu,\nu)}H(\pi)=\mu\otimes\nu$, the independent coupling, for which $H(\mu\otimes\nu)=H(\mu)+H(\nu)$; see \cite{CT91} and \cite{G63}. Therefore, for $\eps$ big enough, we have $\mathcal{K}_{c, \eps}(\mu, \nu) = \E^{\mu\otimes\nu}[c(x,y)]$.
\end{proof}

\subsection{Reformulation of the COT problem}\label{sec:cot_prob_proof}
\begin{proof}
  The causal constraint \eqref{causalhM} can be expressed using the following 
  characteristic function:   
  \begin{equation}\label{eq.supl}
  \sup_{l\in\Lcal(\mu)}\E^{\pi}[l(x,y)]\; =
      \begin{cases}
          0 & \text{if $\pi$ is causal;}\\
          +\infty  & \text{otherwise.}
      \end{cases} 
  \end{equation}
  This allows to rewrite \eqref{eq:reg_cot_problem} as
  \begin{align*}
  \mathcal{P}^{\Kcal}_{c, \eps}(\mu, \nu) 
  &= \inf_{\pi \in \Pi(\mu, \nu)}\left\{\E^{\pi}\left[c(x,y)\right]- \eps H(\pi)+ \sup_{l\in\Lcal(\mu)} \E^{\pi}[l(x,y)] \right\} \\
  &= \inf_{\pi \in \Pi(\mu, \nu)}\sup_{l\in\Lcal(\mu)} \left\{\E^{\pi}\left[c(x,y)+l(x,y) \right]- \eps H(\pi)\right\}\\[0.2cm]
  &= \sup_{l\in\Lcal(\mu)}\inf_{\pi \in \Pi(\mu, \nu)} \left\{\E^{\pi}\left[c(x,y)+l(x,y) \right]- \eps H(\pi)\right\}\\[0.2cm]
  &= \sup_{l\in\Lcal(\mu)} \mathcal{P}_{c+l, \eps}(\mu, \nu),
  \end{align*}
  where the third equality holds by the min-max theorem, thanks to convexity of $\Lcal(\mu)$, and convexity and compactness of $\Pi(\mu,\nu)$. 
\end{proof}

\subsection{Sinkhorn divergence at the level of mini-batches}\label{sect.app.sink}   

\paragraph{Empirical observation of the bias in Example~\ref{bias_example}.} 
In the experiment mentioned in \cref{bias_example}, we consider a set of distributions $\nu$'s as sinusoids with random phase, frequency and amplitude. 
We let $\mu$ be one element in this set whose amplitude is uniformly distributed
between minimum 0.3 and maximum 0.8. On the other hand, for each $\nu$, 
the amplitude is uniformly distributed 
between the same minimum 0.3 and a maximum that lies in $\{0.4,0.5,\dots,1.2\}$. Thus, the only parameter of the distribution being varied is the 
maximum amplitude. We may equivalently take the maximum amplitude as a single $\theta$ 
that parameterizes $\nu_\theta$, so that $\mu=\nu_{0.8}$.  \cref{fig:sink_bias} illustrates that 
the sample Sinkhorn divergence \eqref{Sink_gpc}  (or regularized distance \eqref{eq:Wce}) does  
not recover the optimizer $0.8$, while the proposed mixed Sinkhorn divergence \eqref{Sink_W4} does.

\paragraph{Comparison of various implementations.}
Motivated by \citet{biaswasserstein}, \citet{mixed_sink} address the problem of bias in the mini-batch gradients of Wasserstein distance by proposing a mini-batch Sinkhorn divergence that is closely related to \eqref{Sink_W4}.  
We denote the implementation of a mini-batch Sinkhorn divergence in \citet{mixed_sink} as 
\begin{align*}
\widehat{\mathcal{W}}_{c,\epsilon}^{6}
 &:= \mathcal{W}_{c, \eps}(\hat{\textbf{x}}, \hat{\textbf{y}}_\theta)
+\mathcal{W}_{c, \eps}(\hat{\textbf{x}}, \hat{\textbf{y}}_\theta')
+\mathcal{W}_{c, \eps}(\hat{\textbf{x}'}, \hat{\textbf{y}}_\theta)
+\mathcal{W}_{c, \eps}(\hat{\textbf{x}}', \hat{\textbf{y}}_\theta') \\
& \quad -2\mathcal{W}_{c, \eps}(\hat{\textbf{x}}, \hat{\textbf{x}}')
-2\mathcal{W}_{c, \eps}(\hat{\textbf{y}}_\theta, \hat{\textbf{y}}_\theta').
\end{align*}

In addition to \eqref{Sink_gpc} and \eqref{Sink_W4}, we further consider other possible variations of the Sinkhorn divergence at the level of mini-batches, including 
\begin{equation*}
   \widehat{\mathcal{W}}_{c,\epsilon}^{3} := 2\mathcal{W}_{c, \eps}(\hat{\textbf{x}}, \hat{\textbf{y}}_\theta) -\mathcal{W}_{c, \eps}(\hat{\textbf{x}}, \hat{\textbf{x}}') -\mathcal{W}_{c, \eps}(\hat{\textbf{y}}_\theta, \hat{\textbf{y}}_\theta')
\end{equation*}
and
\begin{align*}
\widehat{\mathcal{W}}_{c,\epsilon}^{8} &:= \mathcal{W}_{c, \eps}(\hat{\textbf{x}}, \hat{\textbf{y}}_\theta)
+\mathcal{W}_{c, \eps}(\hat{\textbf{x}}, \hat{\textbf{y}}_\theta')
+\mathcal{W}_{c, \eps}(\hat{\textbf{x}'}, \hat{\textbf{y}}_\theta)
+\mathcal{W}_{c, \eps}(\hat{\textbf{x}}', \hat{\textbf{y}}_\theta') \\
& \quad -\mathcal{W}_{c, \eps}(\hat{\textbf{x}}, \hat{\textbf{x}}')
-\mathcal{W}_{c, \eps}(\hat{\textbf{y}}_\theta, \hat{\textbf{y}}_\theta')
-\mathcal{W}_{c, \eps}(\hat{\textbf{x}}, \hat{\textbf{x}})
-\mathcal{W}_{c, \eps}(\hat{\textbf{y}}_\theta, \hat{\textbf{y}}_\theta).
\end{align*}

The superscripts in $\widehat{\mathcal{W}}_{c,\epsilon}^3$, $\widehat{\mathcal{W}}_{c,\epsilon}^6$ and $\widehat{\mathcal{W}}_{c,\epsilon}^8$ indicate the number of terms used in the mini-batch implementation of the Sinkhorn divergence. In the same spirit, our choice of mixed Sinkhorn divergence $\widehat{\mathcal{W}}_{c,\epsilon}^{\text{mix}}$ corresponds to $\widehat{\mathcal{W}}_{c,\epsilon}^4$. 

We compare the performance of all the variations in the low-dimensional applications of multivariate AR-1 and 1-D noisy oscillation (see Appendix \ref{app:experiment} for experiment details) in \cref{fig:ar_w6} and \cref{fig:sine_w6}, and in the high-dimensional applications of Sprite animations and the Weizmann Action dataset in \cref{tab:compare}.  The superscripts on COT-GAN correspond to the Sinkhorn divergence used in the experiments. We replace the COT-GAN objective \eqref{Sink_W4} with \eqref{Sink_gpc} in the experiment of $\text{COT-GAN}^2$, with $\widehat{\mathcal{W}}_{c,\epsilon}^3$ in $\text{COT-GAN}^3$, with $\widehat{\mathcal{W}}_{c,\epsilon}^6$ in $\text{COT-GAN}^6$, and with $\widehat{\mathcal{W}}_{c,\epsilon}^8$ in $\text{COT-GAN}^8$, respectively.  

\begin{figure}[ht!]
    \centering
    \includegraphics[width=\textwidth]{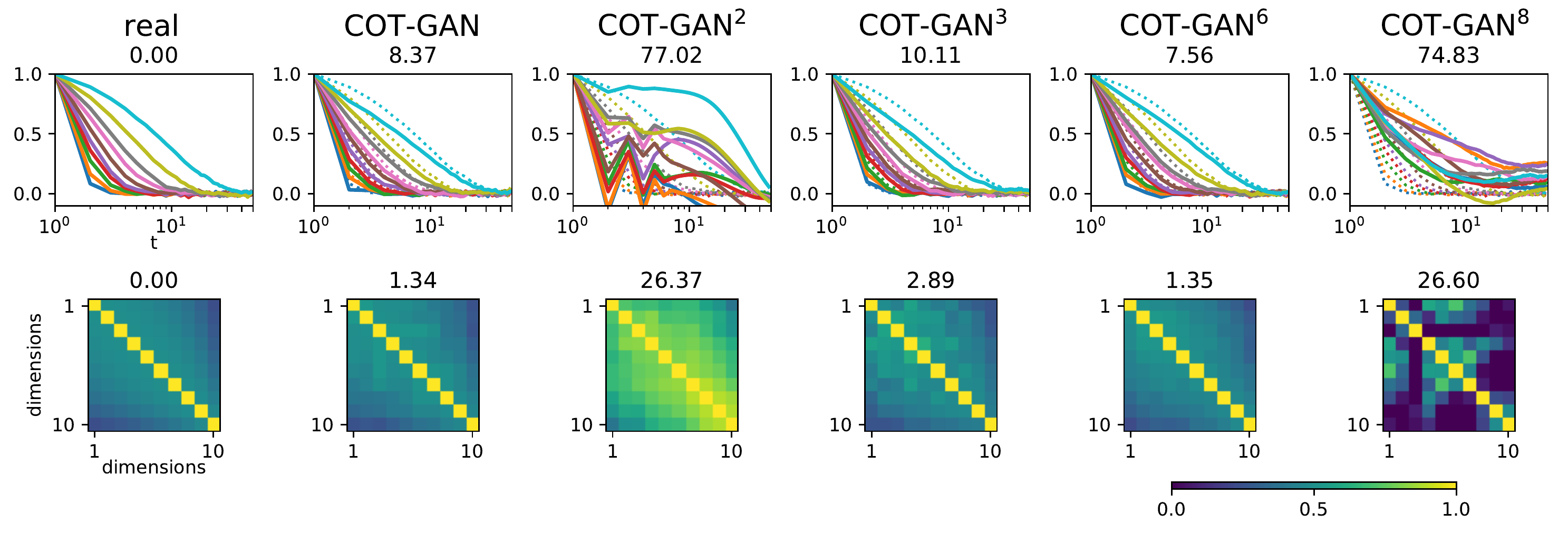}
    \caption{Results on learning the multivariate AR-1 process.}
    \label{fig:ar_w6}
\end{figure}

\begin{figure}
    \centering
    \includegraphics[width=\textwidth]{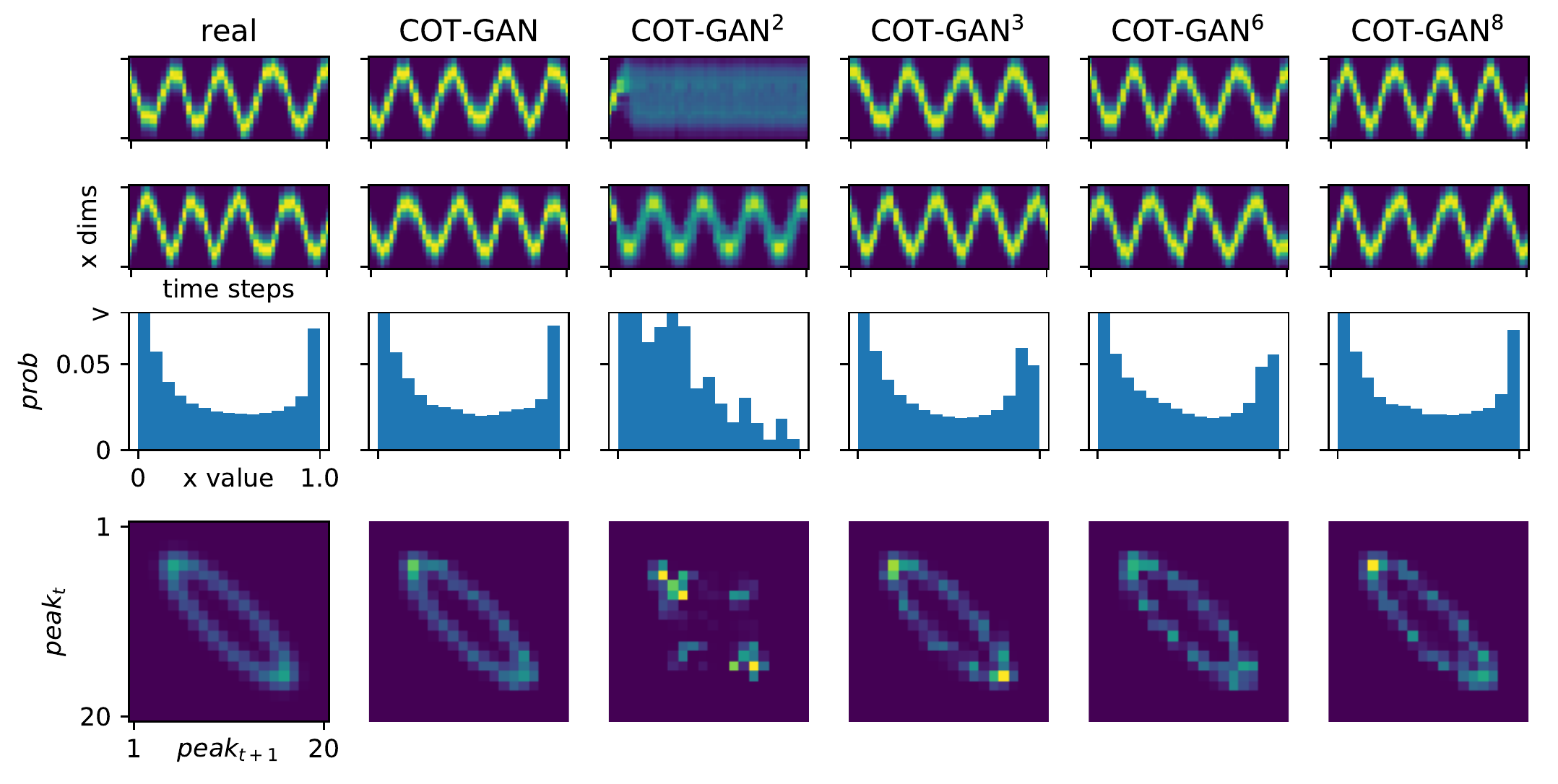}
    \caption{1-D noisy oscillation. Top two rows show two samples from the data distribution and generators trained by different methods. Third row shows marginal distribution of pixels values (y-axis clipped at 0.07 for clarity). Bottom row shows joint distribution of the position of the oscillation at adjacent time steps.}
    \label{fig:sine_w6}
\end{figure}

In the low-dimensional experiments, COT-GAN outperforms $\text{COT-GAN}^6$ on the 1-D noisy oscillation, but underperforms it on the multivariate AR-1 experiment.  Both COT-GAN and $\text{COT-GAN}^6$ obtain significantly better results than all other variations of the mini-batch Sinkhorn divergence. Given the low-dimensional results, we only compare COT-GAN and $\text{COT-GAN}^6$ in the high-dimensional experiments. As shown in Table \ref{tab:compare}, COT-GAN performs the best in all evaluation metrics except for KVD for Sprites animation.  Both COT-GAN and $\text{COT-GAN}^6$ perform better than MoCoGAN in these two tasks. However, because $\text{COT-GAN}^6$requires more mini-batches in the computation, it is about 1.5 times slower than COT-GAN.

\begin{table}[ht]
\centering
\caption{Evaluations for video datasets. Lower value indicates better sample quality.}
\begin{tabular}[t]{lcccc}
\hline
\hline
\bf{Sprites}   &   FVD     &      FID      &       KVD  &      KID \\
\hline
MoCoGAN  &   1\,108.2    &   280.25       &   146.8    &  0.34       \\
$\text{COT-GAN}^6$ &   620.1       &   109.1  &      \bf{64.5}  &  0.091 \\
COT-GAN                     & \bf{458.0}    &   \bf{84.6}      &  66.1  &  \bf{0.081}    \\

\hline

\bf{Human actions}          &               &               &            &            \\
\hline
MoCoGAN   &   1\,034.3       &   151.3       &   89.0     &  0.26   \\
$\text{COT-GAN}^6$ &   630.8  &   109.2       &  46.79 &  0.19 \\
COT-GAN       &   \bf{462.8}       &   \bf{58.9}   &  \bf{43.7}     &  \bf{0.13}\\
\hline
\hline
\vspace{1pt}
\end{tabular}
\label{tab:compare}
\end{table}

\paragraph{The MMD limiting case.}
In the limit $\eps\to\infty$, \citet{GPC} showed that
$\Wcal_{c,\eps}(\mu,\nu)\to\MMD_{-c}(\mu,\nu)$ under the kernel
defined by $-c(x,y)$. Here we want to point out an interesting fact about the limiting behavior of the mixed Sinkhorn divergence.

\begin{rem}
Given distributions of mini-batches $\xhatm$ and $\yhatm$ formed by samples
from $\mu$ and $\nu$, respectively, 
in the limit $\eps\to\infty$,
the Sinkhorn divergence $\What_{c,\eps}(\xhatm, \yhatm)$ converges to a biased estimator of 
$\MMD_{-c}(\mu,\nu)$;
given additional $\xhatpm$ and $\yhatpm$ from $\mu$ and $\nu$, respectively, 
the mixed Sinkhorn divergence 
$\What^{\text{mix}}_{c,\eps}(\xhatm,\xhatpm,\yhatm,\yhatpm)$ converges to 
an unbiased estimator of $\MMD_{-c}(\mu,\nu)$.
\end{rem}
\begin{proof}
The first part of the statement relies on the fact that $\MMD_{-c}(\xhatm,\yhatm)$ is a biased estimator of $\MMD_{-c}(\mu,\nu)$. Indeed, we have
\[
\What_{c,\eps}(\xhatm, \yhatm)\stackrel{\eps\to\infty}{\longrightarrow}
\MMD_{-c}(\xhatm,\yhatm) 
 = - \frac{1}{m^2}\sum_{i=1}^m \sum_{j=1}^m [c(x^i,x^j) + c(y^i,y^j) - 2c(x^i,y^j)].
\]
Now note that
\begin{align*}
\frac{1}{m^2}\sum_{i=1}^m \sum_{j=1}^m \E [c(x^i,x^j)] &=
\frac{1}{m^2}
\left[\sum_{i=1}^m\E_\mu[c(x^i,x^i)] + \sum_{i\ne j} \E_{\mu\otimes\mu}[c(x^i, x^j)]\right]\\
& = \frac{m-1}{m} \E_{\mu\otimes\mu}[c(x, x') ],
\end{align*}
where we have used the fact that $c(x^i,x^i)=0$. 
A similar result holds for the sum over $c(y^i,y^j)$. 
On the other hand, $\frac{1}{m^2}\sum_{ij}\E[c(x^i,y^j)]=\E_{\mu\otimes\nu}[c(x,y)]$.
Therefore
\begin{align*}
\E\MMD_{-c}(\xhatm,\yhatm) &= -\frac{m-1}{m}[
    \E_{\mu\otimes\mu}[c(x, x')] + \E_{\nu\otimes\nu}[c(y, y')]] + 2\E_{\mu\otimes\nu}[c(x,y)]\\
    &\ne \MMD_{-c}(\mu,\nu),
\end{align*}
which completes the proof of the first part of the statement. 

For the second part, note that 
$\Wcal_{c,\eps}(\mu,\nu)\to\E_{\mu\otimes\mu}[c(x,x')]$ as $\eps\to\infty$
\citep[Theorem 1]{GPC}, thus
\begin{align*}
\What^{\text{mix}}_{c,\eps}(\xhatm, \xhatpm, \yhatm, \yhatpm) &\to 
\E_{\xhatm\otimes\yhatm}[c(x,y)] + \E_{\xhatpm\otimes\yhatpm}[c(x',y')] 
- \E_{\xhatm\otimes\xhatpm}[c(x,x')] - \E_{\yhatm\otimes\yhatpm}[c(y,y')] \\
&=\frac{1}{m^2}\sum_{i=1}^m\sum_{j=1}^m[c(x^i,y^i) + c(x'^i,y'^i) - c(x^i,x'^i) - c(y^i,y'^i)].
\end{align*}
The RHS is an unbiased estimator of $\MMD$, since its expectation is
\begin{align*}
\E_{\mu\otimes\nu}[c(x,y)] + \E_{\mu\otimes\nu}[c(x',y')] 
- \E_{\mu\otimes\mu}[c(x,x')] - \E_{\nu\otimes\nu}[c(y,y')] &=\MMD_{-c}(\mu,\nu).
\end{align*}
\end{proof}

The mixed divergence may still be a biased estimate of the true 
Sinkhorn divergence. 
However, in the experiment 
of Example~\ref{bias_example} we note that the minimum is reached for the parameter $\theta$ close to the real one (\cref{fig:sink_bias}, bottom). 

\section{Experimental details} \label{app:experiment}
\subsection{Low dimensional time series}\label{sec:low_d_exp_detail}
Here we describe details of the experiments in \cref{sec:low_d_exp}.
\paragraph{Autoregressive process.}
The generative process to obtain data $\vxt$ for the autoregressive process is
$$
\mathbf{x}_t=\mathbf{Ax}_{t-1}+\vzeta_t, \quad \vzeta_t\iidsim\mathcal{N}(0,\boldsymbol{\Sigma}),
\quad \boldsymbol{\Sigma} = 0.5\mI + 0.5,
$$
where $\mA$ is diagonal with ten values evenly spaced between $0.1$ and $0.9$.
We initialize $\vx_0$ from a
10-dimensional standard normal, 
and ignore the data in the 
first 10 time steps so that
the data sequence 
begins with a more or 
less stationary distribution.
We use $\lambda=0.1$ and $\eps=10.0$ for this experiment. 

\begin{figure}[t]
    \centering
    \includegraphics[width=\textwidth]{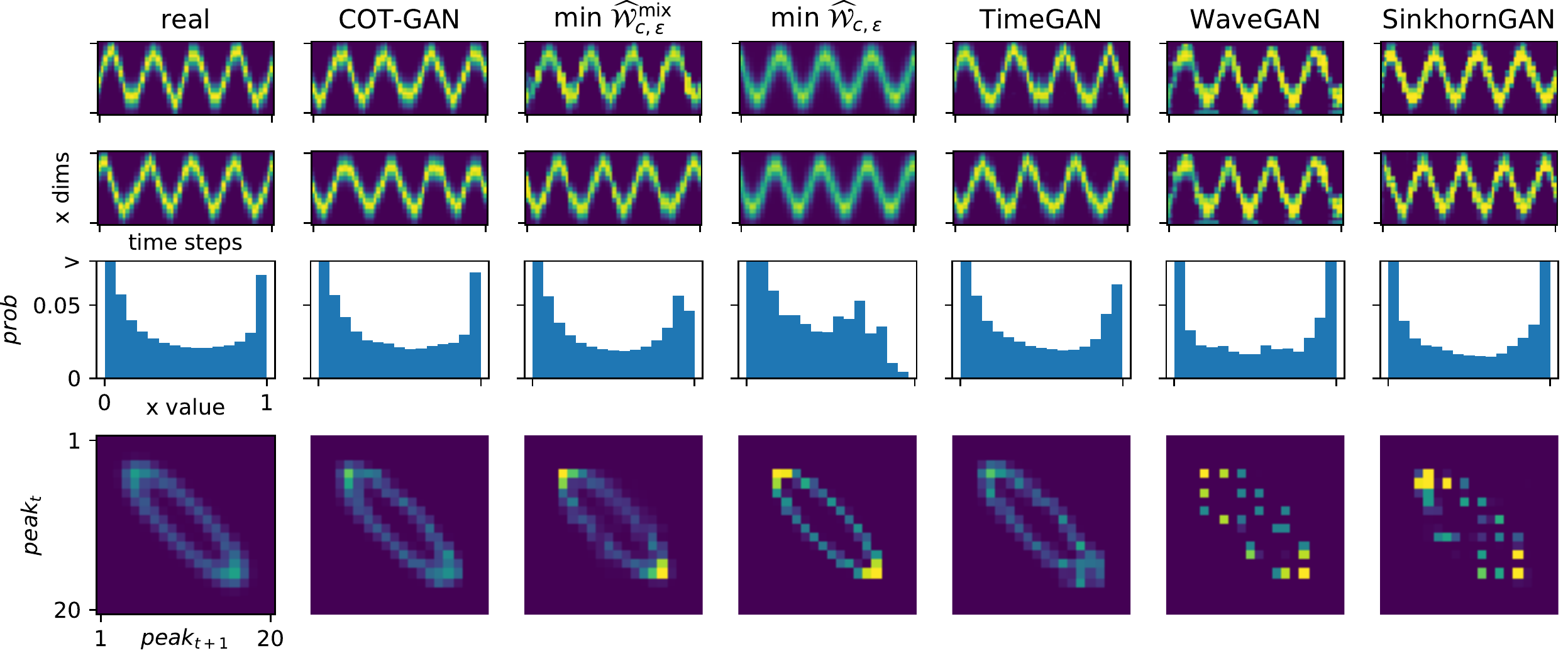}
    \caption{1-D noisy oscillation. 
    Same distributions as in \ref{fig:sine_w6} are shown.
    }
    \label{fig:oscillation}
\end{figure}

\paragraph{Noisy oscillation.}

This dataset comprises paths simulated from a noisy, nonlinear dynamical system.
Each path is represented as a sequence of $d$-dimensional arrays, $T$ time steps long, and can be displayed as a $d\times T$-pixel image for visualization.
At each discrete time step $t\in\{1,\dots,T\}$, data at time $t$, given by $\vx_t\in[0,1]^{d}$, is determined by the position of a ``particle'' following
noisy, nonlinear dynamics. 
When shown as an image, 
each sample path appears visually as a ``bump'' 
travelling rightward, moving up and down in a zig-zag pattern as shown in \cref{fig:oscillation} (top left).

More precisely, the state of the particle at time $t$ 
is described by its position and velocity
$\vs_t=(s_{t,1},s_{t,2}) \in \sR^2$, and evolves 
according to 
\begin{gather*}
    \vs_t = \vf(\vs_{t-1}) + \vzeta_t, \quad \vzeta_t =\mathcal{N}(0, 0.1\mI),
    \quad \\
    \vf(\vstm) = c_t\mathbf{A}\vstm; \quad c_t = \frac{1}{\|\vstm\|_2 \exp(-4(\|\vstm\|_2 -0.3)+1)},\\
\end{gather*}
where $\mA\in \sR^{2\times 2}$ is a rotation matrix, and $\vs_0$ 
is uniformly distributed on the unit circle.

We take $T=48$ and $d=20$ so that $\vx_{t}$ is a vector of
evaluations of a Gaussian function at 20 evenly spaced locations, 
and the peak of the Gaussian function follows the position of the particle 
$s_{t,1}$ for each $t$:
\[
x_{t,i} = \exp\left[-\frac{(\text{loc}(i)-s_{t,1})^2}{2\times0.3^2}\right],\\
\]
where $\text{loc}: \{1,\dots,d\}\to \sR$ maps pixel indices to a grid of evenly spaced points in the space of particle position. Thus, $\vx_t$, the observation at time $t$, contains information 
about $s_{t,1}$ but not $s_{t,2}$. A similar data generating process was 
used in \citep{WenliangSahani2019neurally}, inspired by \citet{johnson2016composing}.

We compare the marginal distribution
of the pixel values $x_{t,i}$ and joint distribution of the bump location ($\argmax_i {x_{t,i}}$)  between adjacent 
time steps. See \cref{fig:oscillation}.

\paragraph{Electroencephalography.} 
We obtain EEG dataset from \citep{eeg_data}
and take the recordings of all the 43 
subjects in the control group under the matching condition (S2). 
For each subject, we choose 75\% of the trials as 
training data and the remaining for evaluation, giving $2\,841$ training sequences and $969$ test sequences in total.
All data are subtracted by channel-wise mean, 
divided by three times the channel-wise standard deviation, 
and then passed through a $\tanh$ nonlinearity.
For COT-GAN, we train three variants corresponding to $\lambda$ 
being one of $\{1.0,0.1,0.01\}$, and $\eps=100.0$ for all OT-based methods.
Data and samples are shown
in \cref{fig:eeg_samples}.

\begin{figure}[t]
    \centering
    \includegraphics[width=\textwidth]{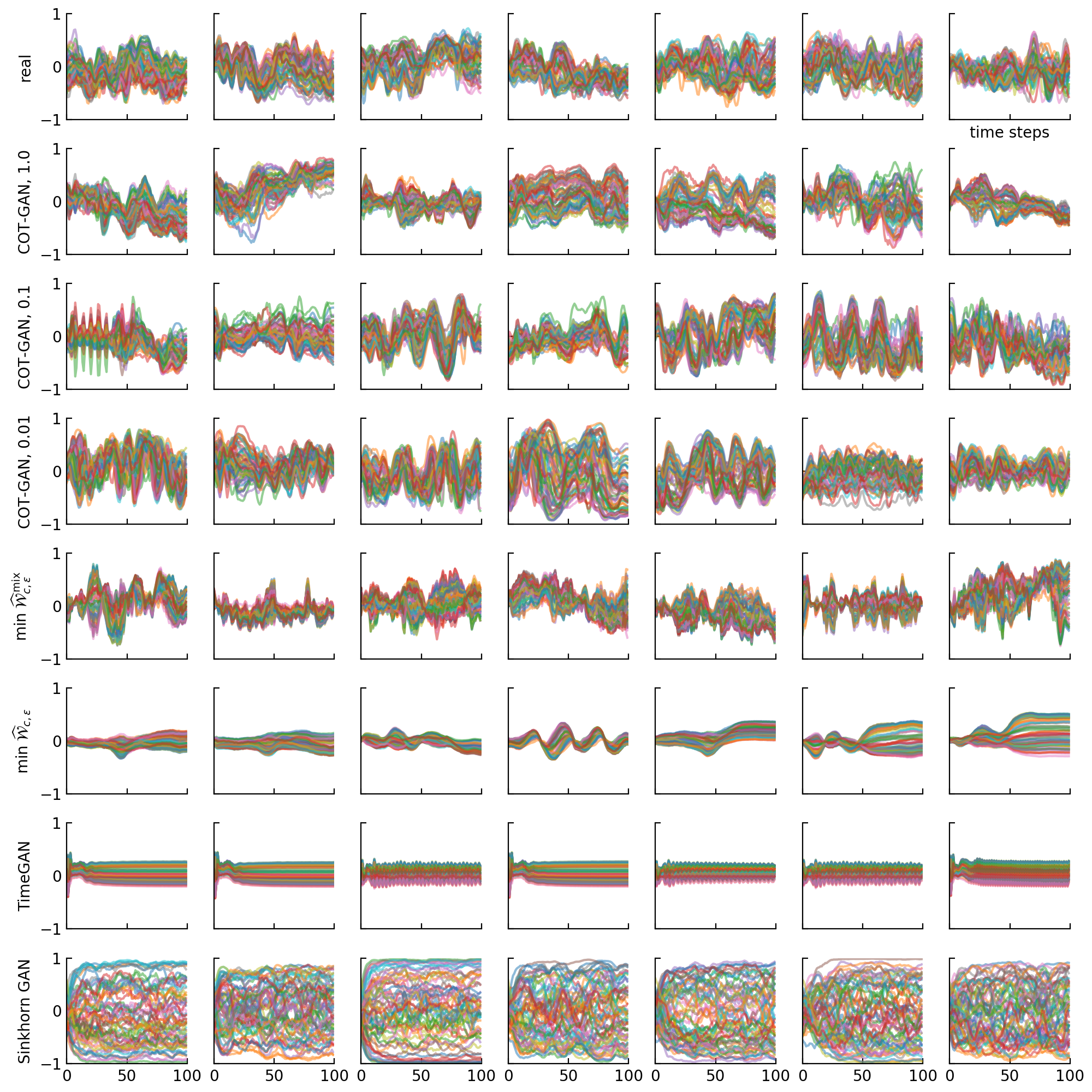}
    \caption{Data and samples obtained
    by different methods for EEG data, 
    the number after COT-GAN indicates the value of $\lambda$.}
    \label{fig:eeg_samples}
\end{figure}

\paragraph{Model and training parameters.}
The dimensionality of the latent state is 10 at each time step, and there is also a 
10-dimensional time-invariant latent state.
The generator common to COT-GAN, direct minimization and SinkhornGAN comprise a 
1-layer (synthetic) or 2-layer (EEG) 
LSTM networks, 
whose output at each time step 
is passed through two layers of fully connected $\operatorname{ReLU}$ networks.
We used Adam for updating $\theta$ and $\varphi$, 
with learning rate 0.001. 
Batch size is 32 for all methods
except for direct minimization of the mixed and original 
Sinkhorn divergence which is trained
 with batch size 64. 
These hyperparameters do not substantially affect the results.

The same discriminator architecture is used for both $h$ and $M$ in COT-GAN and 
the discriminator of the SinkhornGAN.
This network has two layers of 
1-D causal CNN with stride 1, filter length 5. 
Each layer has 32 (synthetic data) or 64 neurons (EEG) at each time step.
The activation is $\operatorname{ReLU}$ except at the output which is linear 
for autoregressive process, $\operatorname{sigmoid}$ for noisy 
oscillation, and $\tanh$ for EEG.

For COT-GAN, $\lambda=10.0$ and $\epsilon=10$ for synthetic datasets, and $\lambda \in\{0.01,0.1,1.0\}$ and $\epsilon=100.0$ 
for EEG. The choice of $\epsilon$ is made based on how fast it converges to a particular
threshold of the transport plan, and each iteration takes around 1 second on 
a 2.6GHz Xeon CPU.

\subsection{Videos datasets} \label{apx:gan_structure}

\subsubsection{Sprite animations}
\paragraph{Data pre-processing.}
The sprite sheets can be created and downloaded from
\footnote{Original dataset is available at \httpsurl{gaurav.munjal.us/Universal-LPC-Spritesheet-Character-Generator}/ \\ and \httpsurl{github.com/jrconway3/Universal-LPC-spritesheet}. To facilitate the use of large dataset in TensorFlow, we pre-shuffled all data used and wrote into tfrecord files. Links for download can be found on the Github repository.}. 
The data can be generated with various feature options for clothing, hairstyle and skin color, etc.\ Combining all feature options gives us 6352 characters in total.  Each character performs spellcast, walk, slash, shoot and hurt movements from different directions, making up to a total number of 21 actions.  As the number of frames $T$ ranges from 6 to 13, we pad all actions to have the same length $T=13$ by repeating previous movements in shorter sequences. We then crop the characters from sheets to be in the center of each frame, which gives a dimension of $64\times64\times4$ for each frame.  We decide to drop the 4th color channel (alpha channel) to be consistent 
with the input setting of baseline models.  Finally, the resulting dataset has 6352 data points consisting of sequences with 13 frames of dimensions $64\times64\times3$. 

\subsubsection{The Weizmann Action database}

\paragraph{Data pre-processing.}
The videos in this dataset consists of clips that have lengths from 2 to 7 seconds. Each second of the original videos contains 25 frames, each of which has dimension 144x180x3. To avoid the absence of objects at the beginning of the videos and to ensure an entire evolution of motions in each sequence, we skip the first 5 frames, then skip every 2 frames and collect 16 frames in a whole sequence as a result.  Due to limited access to hardware, we also downscale each frame to $64\times64\times3$. The training set used contains 89 data points with dimensions $16\times64\times64\times3$.

\paragraph{GAN architectures.}
We detail the GAN architectures used in the experiment of the Weizmann Action database in Table \ref{table:g_structure} and Table \ref{table:d_structure}. A latent variable $z$ of shape $5\times5$ per time step is sampled from a multivariate standard normal distribution and is then passed to a 2-layer LSTM to generate time-dependent features, followed by 4-layer deconvolutional neural network (DCONV) to map the features to frames. In order to connect two different types of networks,  we map the features using a feedforward (dense) layer and reshape them to the desired shape for DCNN.  In Table \ref{table:g_structure} and \ref{table:d_structure}, the DCONV layers have N filter size, K kernel size, S strides and P padding option.  We adopted batch-normalisation layers and the LeakyReLU activation function.
We have two networks to parameterize the process $h$ and $M$ as discriminator share the same structure, shown in Table \ref{table:d_structure}. 

\begin{table}
\centering
\caption{Generator architecture.}
 \begin{tabular}{||c c||}
 \hline
 Generator & Configuration  \\ 
 \hline\hline
 Input &  $z \sim \mathcal{N}(\mathbf{0}, \mathbf{I})$ \\ 
 \hline
 0 & LSTM(state size = 128), BN \\
 \hline
 1 & LSTM(state size = 256), BN \\
 \hline
 2 & Dense(8*8*512), BN, LeakyReLU \\
 \hline
 3 & reshape to 4D array of shape (m, 8, 8, 512) as input for DCONV \\ 
 \hline
 4 & DCONV(N512, K5, S1, P=SAME), BN, LeakyReLU \\ 
 \hline
 5 & DCONV(N256, K5, S2, P=SAME), BN, LeakyReLU \\ 
 \hline
 6 & DCONV(N128, K5, S2, P=SAME), BN, LeakyReLU \\ 
 \hline
 7 & DCONV(N3, K5, S2, P=SAME) \\
 \hline
\end{tabular}
\label{table:g_structure}
\end{table}

\begin{table}
\caption{Discriminator architecture.}
\centering
 \begin{tabular}{||c c||}
 \hline
 Discriminator & Configuration  \\ 
 \hline\hline
 Input & 64x64x3 \\
 \hline
 0 & CONV(N128, K5, S2, P=SAME), BN, LeakyReLU \\
 \hline
 1 & CONV(N256, K5, S2, P=SAME), BN, LeakyReLU \\
 \hline
 2 & CONV(N512, K5, S2, P=SAME), BN, LeakyReLU \\
 \hline
 3 & reshape to 3D array of shape (m, T, -1) as input for LSTM \\ 
 \hline
 4 & LSTM(state size = 512), BN \\ 
 \hline
 5 & LSTM(state size = 128) \\ 
 \hline
\end{tabular}
\label{table:d_structure}
\end{table}

We use a fixed length $T=16$ of LSTM. The state size in the last LSTM layer corresponds to the dimensions of $h_t$ and $M_t$, i.e., $j$ in (\ref{eq:cot_cost}). We also applied exponential decay to learning rate by $\eta_t = \eta_0 r^{s/c}$ where $\eta_0$ is the initial learning rate, $r$ is decay rate, $s$ is the current number of training steps and $c$ is the decaying frequency.  In our experiments, we set the initial learning rate to be $0.001$, decay rate $0.98$, and decaying frequency $500$. The batch size $m$ and time steps $T$ used are both 16.  We have $\lambda=0.01$, $\epsilon = 6.0$ and the Sinkhorn $L=100$ in this experiment.  We train COT-GAN on a single NVIDIA Tesla P100 GPU for 3 or 4 days.  Each iteration takes roughly 1.5 seconds.

\section{Sprites and human action results without cherry-picking}
\label{apx:humanactionresults}

In this section we show random samples of Sprites and human actions generated by COT-GAN, mixed Sinkhorn minimization, and MoCoGAN without cherry-picking.   
The background was static for both experiments.  In the Sprites experiments (see Figure \ref{apx:sprites}), the samples from mixed Sinkhorn minimization and COT-GAN are both of good quality, whereas those from MoCoGAN only capture a rough pattern in the frames and fail to show a smooth evolution of motions. 

\begin{figure}[h]
    \centering
    \includegraphics[width=0.48\textwidth]{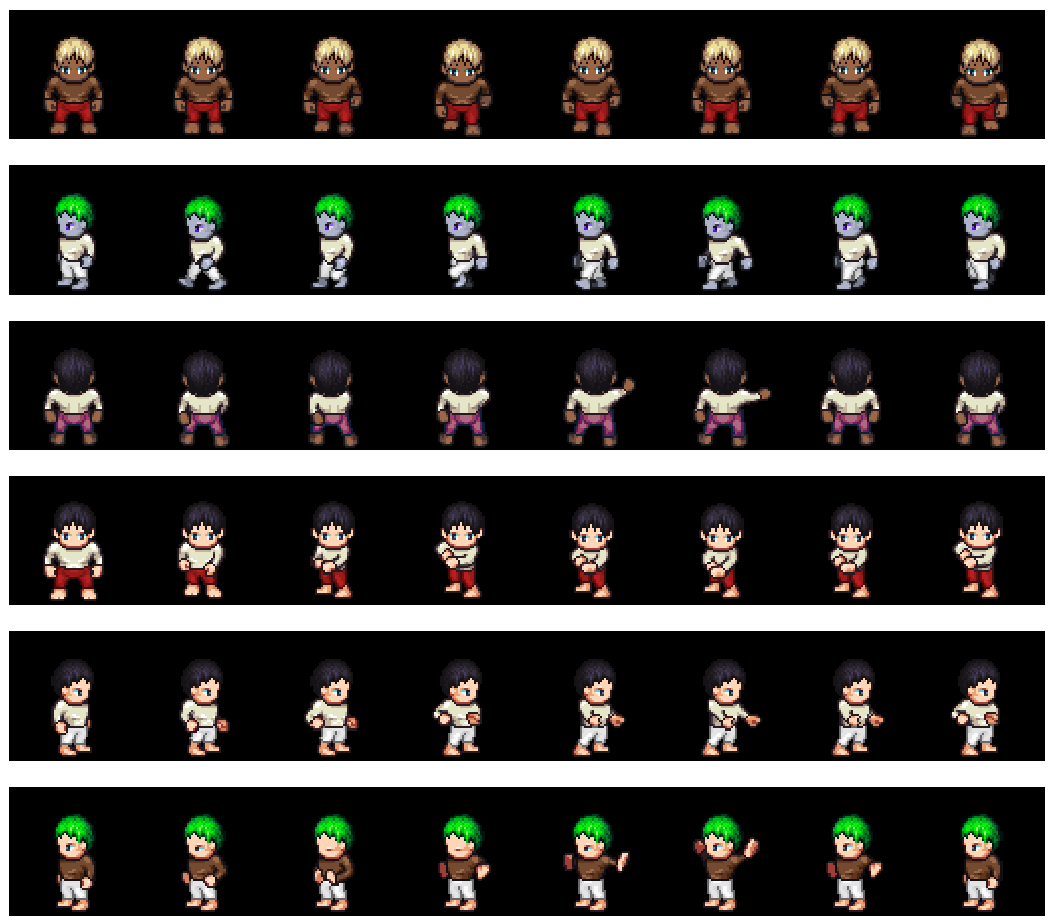}
    \includegraphics[width=0.48\textwidth]{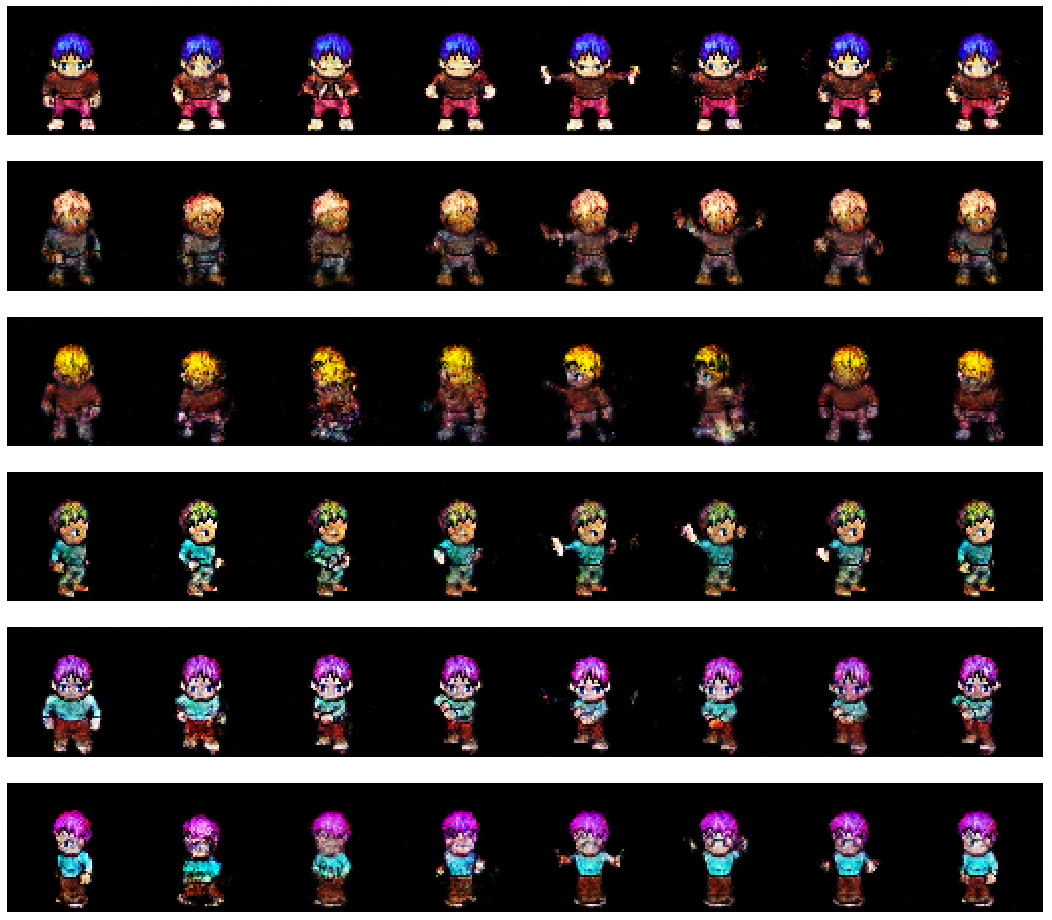}\\
     \vspace{1em}
    \includegraphics[width=0.48\textwidth]{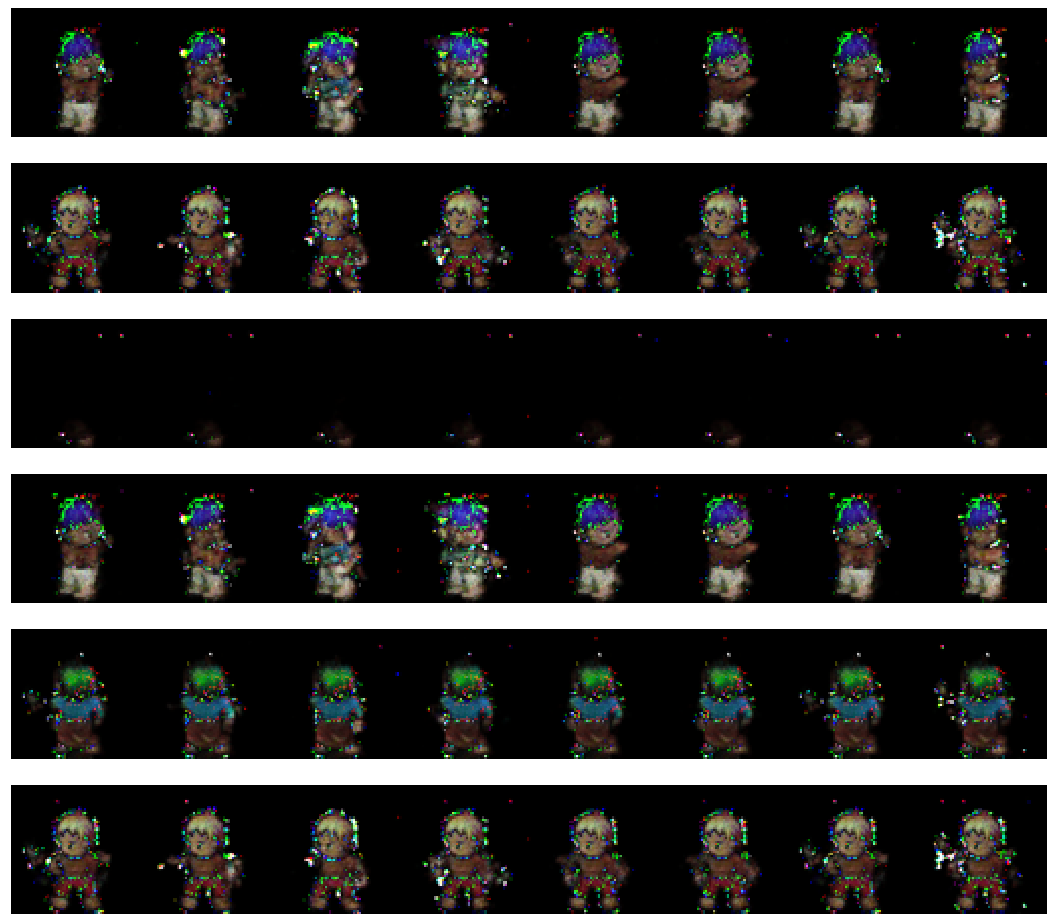}
    \includegraphics[width=0.48\textwidth]{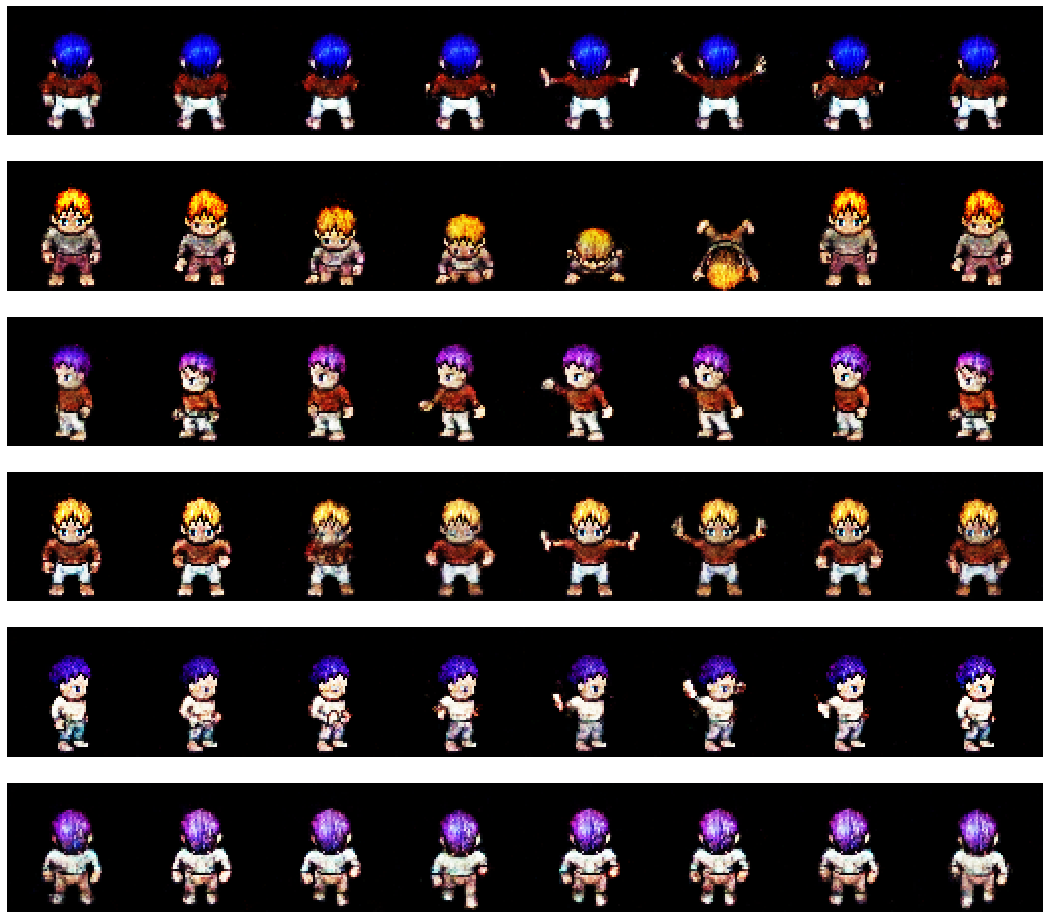}\\
    \caption{Random samples with no cherry picking from models trained on animated Sprites. Top row: real sequences on the left and mixed Sinkhorn minimization on the right; bottom row: MoCoGAN on the left and COT-GAN on the right.}
    \label{apx:sprites}
\end{figure}

In Figure \ref{apx:ha}, we show a comparison of real and generated samples for human action sequences. Noticeable artifacts of COT-GAN and mixed Sinkhorn minimization results include blurriness and even disappearance of the person in a sequence, which normally happens when the clothing of the person has a similar color as the background. MoCoGAN also suffers from this issue and, visually, there appears to be some degree of mode collapse. We used generators of similar capacity across all models and trained COT-GAN, mixed Sinkhorn minimization and MoCoGAN for 65000 iterations.  

\newpage
\begin{figure}[ht]
    \centering
    \includegraphics[width=0.48\textwidth]{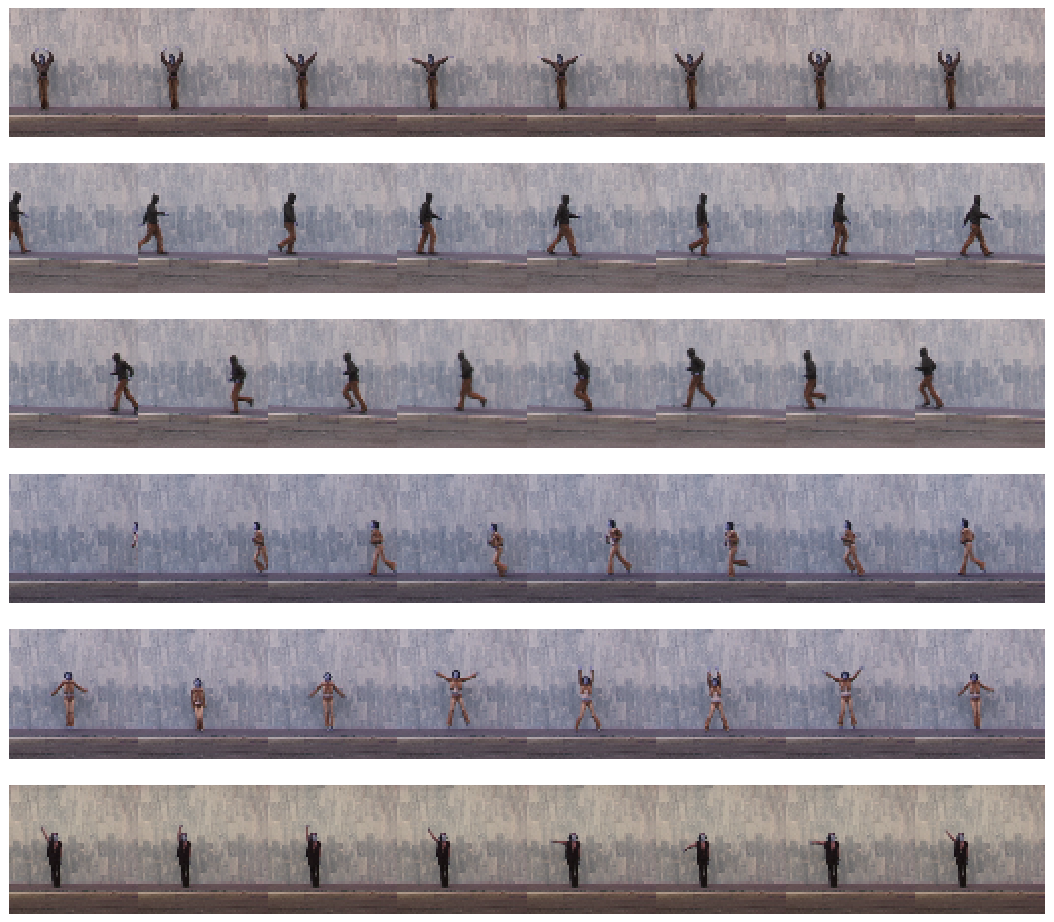}
    \includegraphics[width=0.48\textwidth]{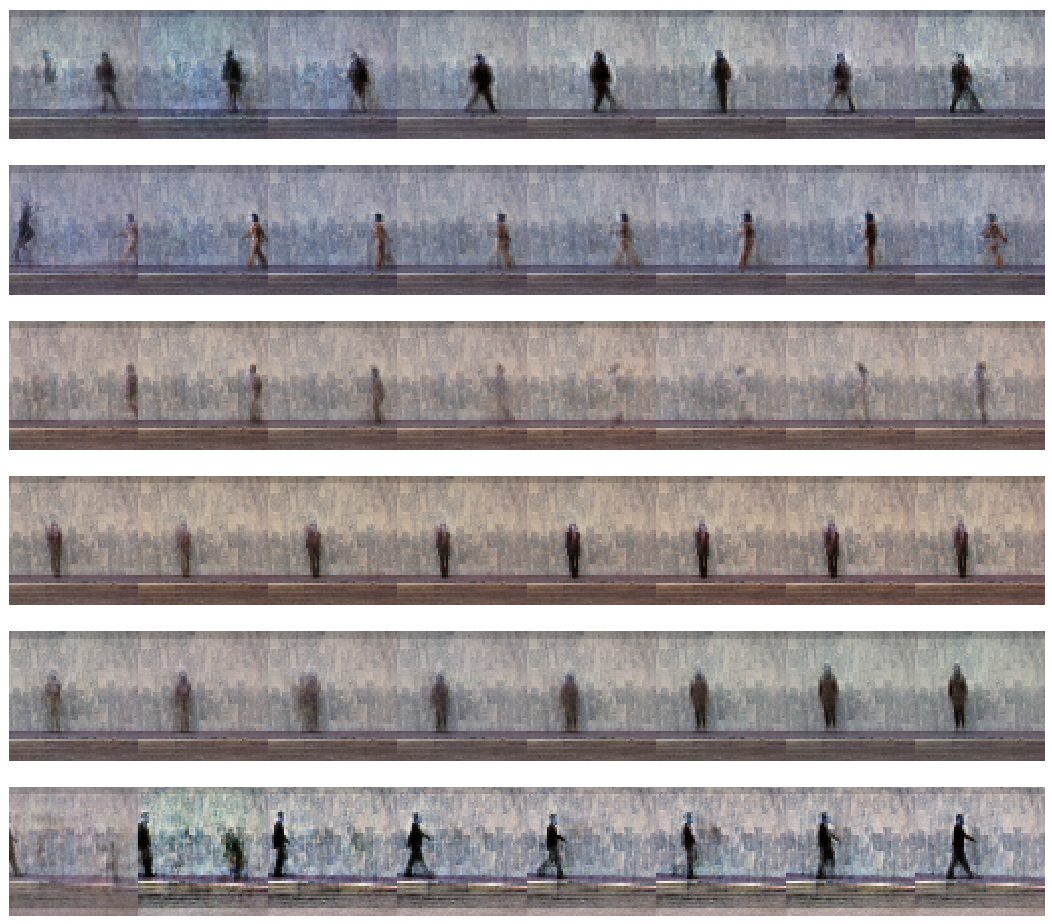} \\
     \vspace{1em}
    \includegraphics[width=0.48\textwidth]{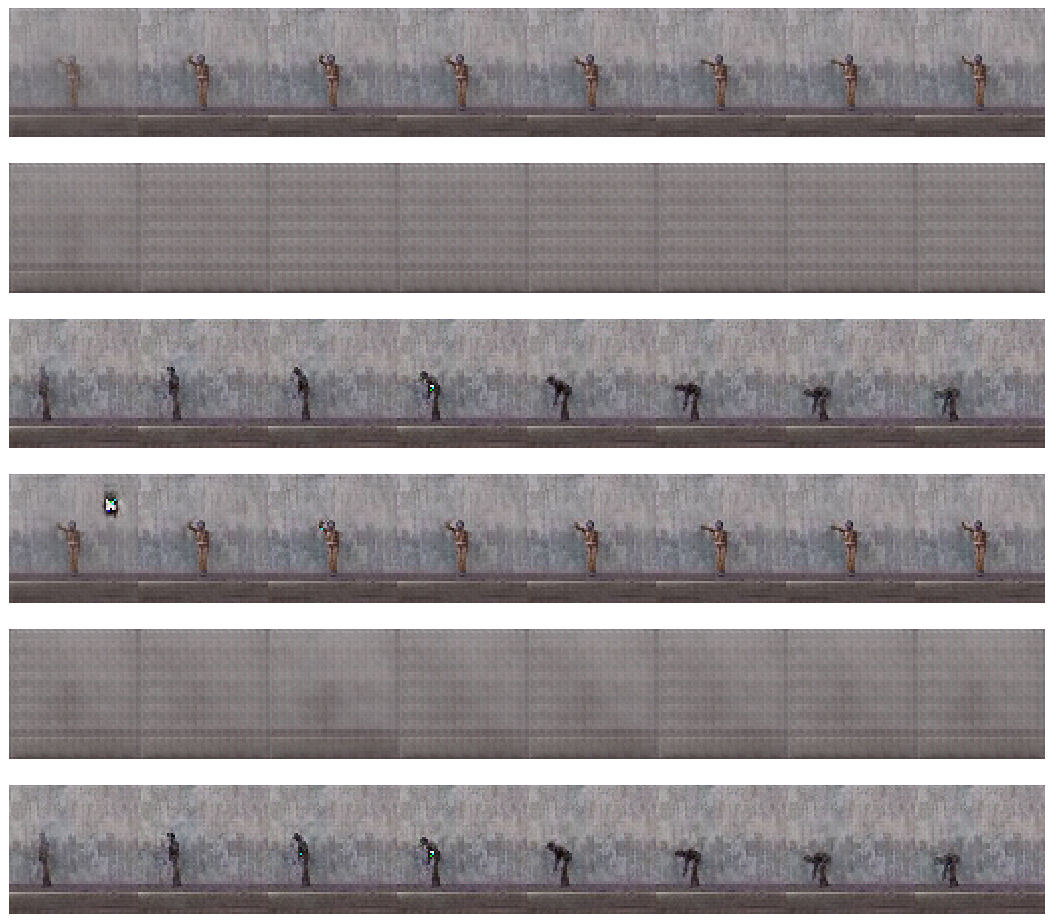}
    \includegraphics[width=0.48\textwidth]{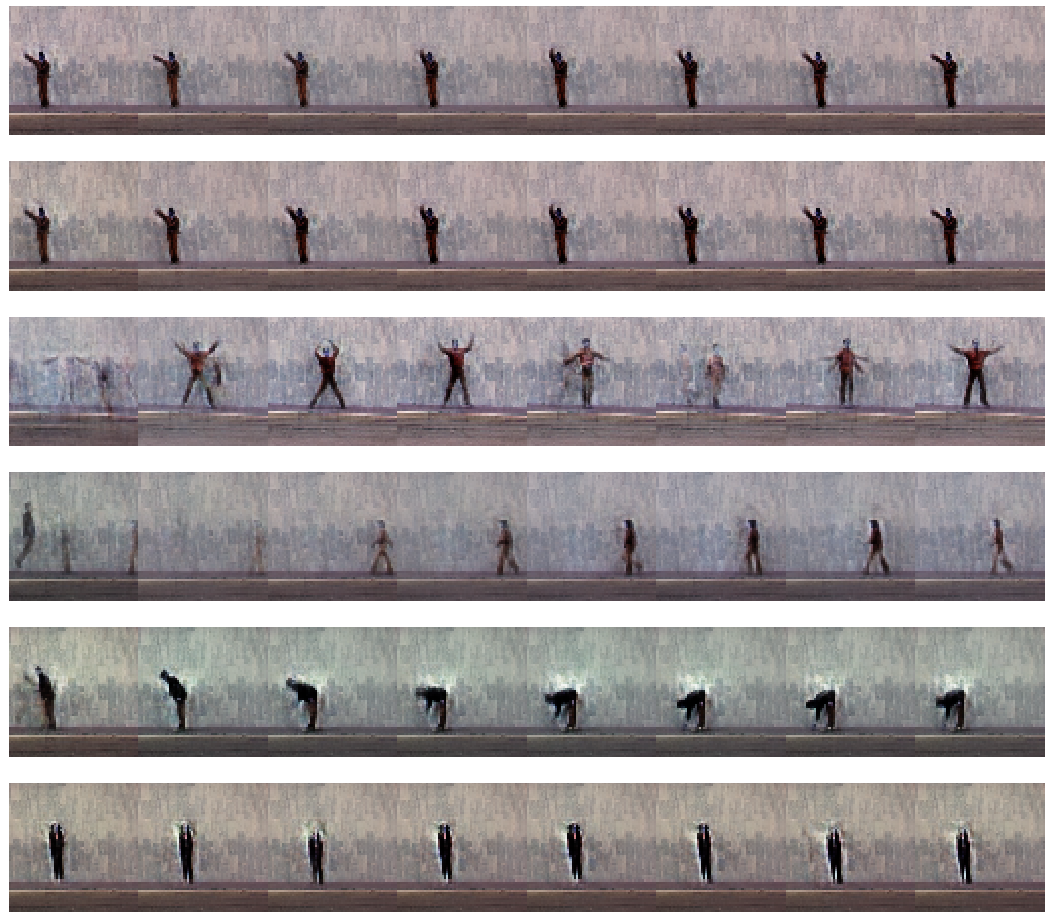}
    \caption{Random samples with no cherry picking from models trained on human actions. Top row: real sequences on the left and mixed Sinkhorn minimization on the right; bottom row: MoCoGAN on the left and COT-GAN on the right.}
    \label{apx:ha}
\end{figure}

\end{document}